\newtheorem{theorem}{Theorem}
\title{Streaming Bayesian inference: theoretical limits and \\
    mini-batch approximate message-passing}
\author[1]{Andre Manoel}
\author[2]{Florent Krzakala}
\author[3]{Eric W. Tramel}
\author[4]{Lenka Zdeborová}
\affil[1]{Neurospin, CEA, Université Paris-Saclay}
\affil[2]{LPS ENS, CNRS, PSL, UPMC \& Sorbonne Univ.}
\affil[3]{OWKIN}
\affil[4]{IPhT, CNRS, CEA, Université Paris-Saclay}
\begin{document}

\maketitle

\begin{abstract}
    In statistical learning for real-world large-scale data problems, one must
    often resort to ``streaming'' algorithms which operate sequentially on
    small batches of data. In this work, we present an analysis of the
    information-theoretic limits of mini-batch inference in the context of
    generalized linear models and low-rank matrix factorization. In a
    controlled Bayes-optimal setting, we characterize the optimal
    performance and phase transitions as a function of mini-batch size. We
    base part of our results on a detailed analysis of a mini-batch version
    of the approximate message-passing algorithm (Mini-AMP), which we
    introduce. Additionally, we show that this theoretical optimality
    carries over into real-data problems by illustrating that Mini-AMP is
    competitive with standard streaming algorithms for clustering.
\end{abstract}

\section{Introduction}
\label{sec:intro}
In current machine learning applications, one often faces the challenge of
\emph{scale:} massive data causes algorithms to explode in time and memory
requirements. In such cases, when it is infeasible to process the full
dataset simultaneously, one must resort to "online" or "streaming" methods
which process only a small fraction of data points at a time --- producing a
step-by-step learning process. Such procedures are becoming more and more
necessary to cope with massive datasets. For example, one can see the
effectiveness of such approaches in deep learning via the stochastic
gradient descent algorithm \cite{bottou2010large} or in statistical
inference via the stochastic variational inference framework
\cite{hoffman_stochastic_2013}.

In this work, we treat streaming inference within a Bayesian framework
where, as new data arrives, posterior beliefs are updated according to
Bayes' rule. One well known approach in this direction is assumed density
filtering (ADF) \cite{opper_bayesian_1998,minka_expectation_2001}, which
processes a single data point at a time, a procedure to which we refer to as
\emph{fully online}. A number of other works analyzed various related fully
online algorithms \cite{mitliagkas2013memory,wang_online_2016},
especially in the statistical physics literature 
\cite{kinouchi1992optimal,biehl1994online,solla_optimal_1998,saad1999online,rossi_bayesian_2016}.
We are instead interested in the case where multiple samples -- a
\emph{mini-batch} -- arrive at once. Tuning the size of these mini-batches
allows us to to explore the trade-off between the precision and efficiency. 

Our motivation and setting are very much along the lines of streaming
variational Bayes (VB) inference \cite{broderick_streaming_2013}. With
respect to existing works, we bring three main contributions. (i) We
introduce a streaming algorithm based on approximate message passing (AMP)
\cite{donoho_message-passing_2009,rangan_generalized_2011,zdeborova_statistical_2016}
that we call Mini-AMP. As AMP treats some of the correlations which VB
neglects, it is expected that AMP either outperforms or matches VB. (ii)
Unlike other general-purpose algorithms for Bayesian inference, such as
Gibbs sampling or VB, AMP possesses the state evolution method which
asymptotically describes the performance of the algorithm for a class of
generative models. We extend this state evolution analysis to Mini-AMP.
(iii) For these generative models, we also analyze the optimal streaming
procedure, within a class of procedures that retains only point-wise
marginals from one step to another, and characterize regions of parameters
where Mini-AMP reaches optimality.

\section{Problem setting}
\label{sec:setting}
Denoting the vector of $N$ values to be estimated by $\bm{x}$, the data
presented at step $k$ by $\bm{y}^{(k)}$, and the collection of all
previously presented data by $\mathcal{D}^{(k - 1)} = \{\bm{y}^{(1)}, \dots,
\bm{y}^{(k - 1)} \}$, the posterior distribution at step $k$ is given by
\begin{equation}
    P (\bm{x} | \bm{y}^{(k)}, \mathcal{D}^{(k - 1)}) = \frac{P (\bm{y}^{(k)} |
    \bm{x}) \, P (\bm{x} | \mathcal{D}^{(k - 1)})}{\int d\bm{x} \, P
    (\bm{y}^{(k)} | \bm{x}) \, P(\bm{x} | \mathcal{D}^{(k - 1)})}.
    \label{eq:bayes_rule}
\end{equation}
In other words, with each presentation of new data, the prior distribution
is updated with the posterior distribution derived from the previously
presented data. Directly implementing this strategy is seldom feasible as
the normalizing integral in \eqref{eq:bayes_rule} is intractable in general.
Additionally, to keep the memory requirements small, we would like consider
only the case where $O(N)$ parameters are passed from one step to the next.
With this restriction, one cannot carry over high-order correlations from
previous steps. Instead, following the strategy of
\cite{broderick_streaming_2013}, we resort to a factorized approximation of
the "prior" term of \eqref{eq:bayes_rule},
\begin{equation}
    P (\bm{x} | \mathcal{D}^{(k - 1)}) \approx Q^{(k-1)} (\bm{x}) \!=\!
    \prod_{i = 1}^N \! \mathcal{P}_i \! \big[P(\bm{x} | \bm{y}^{(k-1)},
    \mathcal{D}^{(k - 2)})\big],  
    \label{program}
\end{equation}
where $\mathcal{P}_i[\cdot]$ denotes the posterior marginals of parameter
$x_i$ at a given step. Computing the marginals exactly is still
computationally intractable for most models of interest. In the present
work, this program is carried out with a scheme that is asymptotically exact
for a class of generative models {\it and} that has the advantage of being
amenable to a rigorous analysis.

We leverage the analysis of these models already conducted in the offline
setting using concepts and techniques from statistical physics
\cite{mezard_information_2009,donoho_message-passing_2009,rangan_generalized_2011,zdeborova_statistical_2016,rangan_iterative_2012,krzakala_probabilistic_2012,lesieur_phase_2016,lesieur2017statistical}
which have now been made almost entirely rigorous
\cite{bayati_dynamics_2011,bayati_universality_2015,barbier_mutual_2016-1,barbier2017mutual_last,reeves_replica-symmetric_2016,miolane2017fundamental}.
We show in particular that -- just as for the offline setting -- phase
transitions exist for mini-batch learning problems, and that their
description provides information about the learning error that is achievable
information-theoretically or computationally efficiently.

\section{Generative models and offline learning}
\label{sec:models}
In our theoretical analysis, we consider inference in popular models with
synthetic data generated from a given distribution, such as the perceptron
with random patterns \cite{gardner_three_1989,gyorgyi_first-order_1990},
sparse linear regression with a random matrix (compressed sensing)
\cite{candes_stable_2006} and clustering random mixtures of Gaussians
\cite{lesieur_phase_2016}.
For clarity, we restrict our presentation to the generalized linear models
(GLMs), focusing on sparse linear estimation. Our results, however, can be
extended straightforwardly to any problem where AMP can be applied. For
offline GLMs, the joint distribution of the observation $\bm{y} \in
\mathbb{R}^M$ and the unknown $\bm{x} \in \mathbb{R}^N$ is given by 
\begin{equation}
    P (\bm{y}, \bm{x} | \Phi) = \prod_{\mu = 1}^M P (y_\mu | z_\mu \equiv
        \bm{\Phi}_\mu \cdot \bm{x}) \, \prod_{i = 1}^N P_X (x_i)\, .
    \label{eq:model1}
\end{equation}
where $\bm{\Phi}_\mu$ is the $\mu$-th line of the $M \times N$ matrix
$\Phi$. We consider the situation where $\Phi$ is a random matrix where
each element is taken i.i.d. from ${\cal {N}}(0,1/N)$ and $\alpha = M/N$.
Structured matrices have also been studied with AMP
\cite{rangan2016vector,cakmak2014s}. Two situations of interest described by
GLMs are (a) sparse linear regression (SLR) where the likelihood is Gaussian
$P (y_\mu | z_\mu) = \mathcal{N} (y_\mu ; z_\mu, \Delta)$ and the parameters
are sparse, for instance drawn from a Gauss-Bernoulli distribution $P_X(x_i)
= \rho \, \mathcal{N} (x_i; 0, 1) + (1 - \rho) \, \delta (x_i)$, and (b) the
probit regression problem $P (y_\mu | z_\mu) = \frac{1}{2}
\operatorname{erfc} \big( -\frac{y_\mu z_\mu}{\sqrt{2 \Delta}} \big)$ that
reduces to the perceptron $P (y_\mu | z_\mu) = \theta (y_\mu z_\mu)$ when
$\Delta\!\to \!0$
\cite{gardner_optimal_1988,gyorgyi_first-order_1990,zdeborova_statistical_2016}.
We first summarize the known relevant results for the fully {\it offline}
learning problem, where one processes all data at once. Again, for clarity,
we focus on the case of SLR. 

The marginals estimated by AMP are given by
\cite{donoho_message-passing_2009,rangan_generalized_2011,krzakala_probabilistic_2012}
\begin{equation}
    P (x_i | \Phi, \bm{y}) \approx q (x_i | A, B_i) = P_X (x_i) \,
        e^{-\frac{1}{2} A x_i^2 + B_i x_i} \big/ {Z (A, B_i)} \,,
\label{eq:post-AMP}
\end{equation}
where $Z(A, B_i)$ is a normalization factor. We shall denote the mean and
variance of this distribution by $\eta (A, B) \equiv
\frac{\partial}{\partial B} \log Z (A, B)$ and $\eta' (A, B) \equiv
\frac{\partial}{\partial B} \eta (A, B)$. The mean, in particular, provides
an approximation to the minimum mean-squared error (MMSE) estimate of
$\bm{x}$. The AMP iteration reads
\begin{equation}
    \bm{z}^{(t)} = \bm{y} - \Phi \hat{\bm{x}}^{(t)} +
    \alpha^{-1} \bm{z}^{(t - 1)} \; A^{(t - 1)} V^{(t)},
    \label{eq:amp1} 
\end{equation}
\begin{align}
    &\bm{B}^{(t)} = A^{(t)} \hat{\bm{x}}^{(t)} + A^{(t)} \; \alpha^{-1}
        \Phi^T \bm{z}^{(t)}, &\quad
    &{A}^{(t)} = \frac{\alpha}{\Delta + {V}^{(t)}},
    \label{eq:amp2} \\
    &\hat{x}_i^{(t + 1)} = \eta (A^{(t)}, B_i^{(t)}) \; \forall i, &\quad
    &V^{(t + 1)} = {\textstyle\frac{1}{N} \sum_{i = 1}^N} \,
        \eta' (A^{(t)}, B_i^{(t)}).
    \label{eq:amp3}
\end{align}

One of the main strengths of AMP is that when the matrix $\Phi$ has i.i.d.
elements, the ground truth parameters are generated i.i.d. from a
distribution $P_0(x_i)$, and $P(y_\mu|z_\mu) = {\cal N}(y_\mu; z_\mu,
\Delta_0)$, then the behavior and performance of AMP can be studied
analytically in the large system limit ($N\rightarrow\infty$) using a
technique called state evolution. This was proven by
\cite{bayati_dynamics_2011} who show that in the large $N$ limit, $A^{(t)}$
and $B_i^{(t)}$ converge in distribution such that, defining ${\cal E}^{(t)}
\equiv \mathbb{E} \, \big(\eta ({\cal A}^{(t - 1)}, {\cal B}^{(t - 1)}) -
x\big)^2$, with $x \sim P_0(x)$, and ${\cal V}^{(t)} \equiv \mathbb{E} \,
\eta' ({\cal A}^{(t - 1)}, {\cal B}^{(t - 1)})$, one has
\begin{equation}
    A^{(t)} \rightsquigarrow {\cal A}^{(t)} = \frac{\alpha}{\Delta + {\cal V}^{(t)}},
    \qquad
    B_i^{(t)} \rightsquigarrow {\cal B}^{(t)} \sim \mathcal{N} \left({\cal
        A}^{(t)} x, \alpha \frac{\Delta_0 + {\cal E}(t)} {(\Delta + {\cal
        V}(t))^2}\right) \, .
    \label{bayati}
\end{equation}
The behavior of the algorithm is monitored by the computation of the scalar
quantities ${\cal E}^{(t)}$ and ${\cal V}^{(t)}$. 

The "Bayes-optimal" setting is defined as the case when the generative model
is known and matches the terms in the posterior (\ref{eq:bayes_rule}), i.e.
when $P_X=P_0$, and $\Delta=\Delta_0$. One can show that in this case
${\cal E}^{(t)}={\cal V}^{(t)}$ (the so-called Nishimori property
\cite{zdeborova_statistical_2016}), so that the state evolution further
reduces to
\begin{equation}
    {\cal A}^{(t)} = \frac{\alpha}{\Delta + {\cal E}^{(t)}}, \quad
    {\cal E}^{(t)} = \mathbb{E} \eta' ({\cal A}^{(t-1)}, {\cal A}^{(t-1)} x
        + \sqrt{{\cal A}^{(t-1)}} z),
    \label{eq:se}
\end{equation}
with $x\!\sim\!P_X (x)$, $z\!\sim\!\mathcal{N} (0, 1)$, and ${\cal E}^{(t)}$
is the mean-squared error (MSE) achieved at iteration $t$.

Another set of recent results
\cite{barbier2017mutual_last,reeves_replica-symmetric_2016} allows for the
exact computation of the Bayes-optimal MMSE and the mutual information
between the observations and the unknown parameters. Given model
(\ref{eq:model1}) with Gaussian likelihood, the mutual information per
variable is given by the minimum of the so-called replica mutual
information: $ \lim_{N \to \infty} I({\bm X},{\bm Y}) = {\rm min} ~
i_\text{RS} ({\cal E})$ where, defining, $\Sigma^{-2}({\cal E}) \equiv
\frac{\alpha}{\Delta+{\cal E}}$,
\begin{equation}
    i_\text{RS} ({\cal E}) = \frac{\alpha}{2} \left[ \frac{{\cal E}}{\Delta
        + {\cal E}} + \log{\left(1+ \frac{{\cal E}}{\Delta}\right)} \right] -
        \mathbb{E}_{x, z} \left[ \log \mathbb{E}_{\tilde x}
        e^{-\frac{\left(\tilde x-\left(x+ z \Sigma({\cal E}) \right)\right)^2}{2
        \Sigma^2({\cal E})}} \right]-\frac 12,
    \label{eq:phi}
\end{equation}
with $x \sim P_X (x), \tilde x \sim P_X (\tilde x)$ and $z \sim \mathcal{N}
(0, 1)$. The MMSE is then given by $\arg\min ~ i_\text{RS} ({\cal E})$.

Comparisons between the MMSE and the MSE provided by AMP after convergence
are very instructive, as shown in \cite{zdeborova_statistical_2016}.
Typically, for large enough noise, ${\cal E}^{(t\to \infty)}={\cal E}_{\rm
AMP}=\rm{MMSE}$ and AMP achieves the Bayes-optimal result in polynomial
time, thus justifying, {\it a posteriori}, the interest of such algorithms
in this setting. In fact, since the fixed points of the state evolution are
all extrema of the mutual information (\ref{eq:phi}), it is useful to think
of AMP as an algorithm that attempts to minimize (\ref{eq:phi}). However, a
\emph{computational} phase transition can exist at low noise levels, where
$i_\text{RS} ({\cal E})$ has more than a single minimum. In this case, it
may happen that AMP does not reach the \emph{global} minimum, and therefore
${\cal E}_{{\rm AMP}}>{\rm MMSE}$. It is a remarkable open problem to
determine whether finding the MMSE in this region is computationally
tractable. The results we have just described are not merely restricted to
SLR, but appear {\it mutatis mutandis} in various cases of low-rank matrix
and tensor
factorization~\cite{rangan_iterative_2012,lesieur_phase_2016,lesieur2017statistical,krzakala2016mutual,barbier_mutual_2016-1,miolane2017fundamental}
and also partly in
GLMs~\cite{rangan_generalized_2011,zdeborova_statistical_2016} (in GLMs the
replica mutual information is so far only conjectured).

\section{Main results}
\label{sec:results}
\subsection{Mini-AMP}
Our first contribution is the Mini-AMP algorithm, which adapts AMP to the
streaming setting. Again, we shall restrict the presentation to the linear
regression case. The adaptation to other AMP algorithms is straightforward.
We consider a dataset of $M$ samples with $N$ features each, which we split
into $B$ mini-batches, each containing $M_b = M / B$ samples. We denote
$\alpha = M/N$ and $\alpha_b = M_b / N$. Crucially, for each step, the
posterior marginal given by AMP \eqref{eq:post-AMP} is the prior multiplied
by a quadratic form. Performing the program discussed in \eqref{program} is
thus tractable as the ${\cal P}_i$ are given by a Gaussian distribution
multiplied by the original prior. The only modification w.r.t. the offline
AMP at each step is thus to update the prior by multiplying the former one
by the exponential in \eqref{eq:post-AMP}. In other words, we use the
following "effective" prior when processing the $k$-th mini-batch:
\begin{equation}
    P^{k}_ {\Lambda_{k-1}, \bm{\Theta}_{k-1}}(\bm{x}) = P_X (\bm{x}) \, \prod_{i = 1}^N e^{-\frac{1}{2} \Lambda_{k-1} x_i^2 + \Theta_{k-1, i} x_i},\, 
    \quad \text{where }
	\Lambda_{k-1}=\sum_{\ell=1}^{k-1} A_{l},~~\Theta_{k-1, i}=\sum_{\ell=1}^{k-1} B_{l,i}.
\label{eff-prior}
\end{equation}
In practice, the only change when moving from AMP to Mini-AMP is therefore
the update of the arguments of the $\eta$ function. After $k$ mini-batches
have been processed, one replaces \eqref{eq:amp3} by
\begin{equation}
    \begin{aligned}
        \hat{x}_{k, i}^{(t + 1)} &= \eta \bigg( \underbrace{{\textstyle\sum}_{\ell = 1}^{k - 1} A_{\ell}}_{\Lambda_{k - 1}} + A_{k}^{(t)},
        \underbrace{{\textstyle\sum}_{\ell = 1}^{k - 1} B_{\ell, i}}_{\Theta_{k - 1,i}} + B_{k, i}^{(t)}\bigg), \\
    V_{k}^{(t + 1)} &= {\textstyle \frac{1}{N} \sum_{i = 1}^{N} \,
            \eta' \bigg( \underbrace{{\textstyle\sum}_{\ell = 1}^{k - 1} A_{\ell}}_{\Lambda_{k - 1}} + A_{k}^{(t)},
        \underbrace{{\textstyle\sum}_{\ell = 1}^{k - 1} B_{\ell, i}}_{\Theta_{k - 1, i}} + B_{k, i}^{(t)}\bigg).}
    \end{aligned}
    \label{eq:miniamp}
\end{equation}
The corresponding pseudo-code is given as Algorithm $1$. Each Mini-AMP
iteration has a computational complexity proportional to $M_b \times N$. We
note that, in the fully online scheme when $M_b = 1$, Mini-AMP with a single
iteration performed per sample gives the same as ADF
\cite{opper_bayesian_1998,rossi_bayesian_2016}.

\begin{algorithm}[ht!]
    \caption{Mini-AMP}
    \algnewcommand\algorithmicto{\textbf{to}}
    \algrenewtext{For}[3]%
    {\algorithmicfor\ $#1 \gets #2$ \algorithmicto\ $#3$ \algorithmicdo}
    \begin{algorithmic}[1]
        \State initialize $\Lambda_0 = 0$, $\Theta_{0, i} = 0 \; \forall i$
        \For{k}{1}{B}
            \State initialize $z_{k, \mu}^{(1)} = 0 \; \forall \mu$
            \State initialize $\hat{x}_{k, i}^{(1)} = \eta (\Lambda_{k - 1}, \Theta_{k - 1, i}) \; \forall i,
                V_{k}^{(1)} = \frac{1}{N} \sum_{i = 1}^N \eta' (\Lambda_{k - 1}, \Theta_{k - 1, i})$
            \For{t}{1}{t_\text{max}}
                \State compute $\bm{z}_k^{(t)}$ using (\ref{eq:amp1})
                \State compute $A_k^{(t)}$, $\bm{B}_k^{(t)}$ using (\ref{eq:amp2})
                \State compute $V_k^{(t + 1)}$, $\hat{\bm{x}}_k^{(t + 1)}$ using (\ref{eq:miniamp})
            \EndFor
            \State accumulate $\Lambda_{k} \gets \Lambda_{k - 1} + A_k$
            \State accumulate $\bm{\Theta}_{k} \gets \bm{\Theta}_{k - 1} + \bm{B}_k$
        \EndFor
    \end{algorithmic}
\end{algorithm}

\subsection{State evolution}
\begin{theorem}[State evolution of Mini-AMP]\label{thm1} 
    For a random matrix $\Phi$, where each element is taken i.i.d. from ${\cal
    {N}}(0,1/N)$, the MSE of Mini-AMP can be monitored asymptotically
    ($N\!\to\!\infty$ while $\alpha_b\!=\!O(1)$) by iterating the following
    state evolution equations,
    \begin{equation}
        \begin{aligned}
            \lambda_{k}^{(t)} &= \lambda_{k - 1} + \frac{\alpha_\text{b}}{\Delta + {\cal V}_{k}^{(t)}}, &\quad\quad
            {\cal V}_{k}^{(t + 1)} &= \mathbb{E}_{x, z} \,
            \eta' \big( \lambda_{k}^{(t)}, \lambda_{k}^{(t)} x + \sqrt{\gamma_{k}^{(t)}} z \big), \\
            \gamma_{k}^{(t)} &= \gamma_{k - 1} + \alpha_\text{b} \frac{\Delta_0 + {\cal E}_k^{(t)}}{(\Delta + {\cal V}_k^{(t)})^2},
            &\quad\quad
            {\cal E}_{k}^{(t + 1)} &= \mathbb{E}_{x, z} \Big( \eta \big( \lambda_{k}^{(t)}, \lambda_{k}^{(t)} x + \sqrt{\gamma_{k}^{(t)}} z \big) - x \Big)^2 \, ,
        \end{aligned}
        \label{eq:minise}
    \end{equation}
    where $x \sim P_0(x)$, and $z \sim {\cal N}(0,1)$. For each $k = 1,
    \dots, N_\text{b}$, these equations are iterated from $t = 1, \dots,
    t_\text{max}$, at which point we assign $\lambda_{k + 1} = \lambda_{k +
    1}^{(t_\text{max})}$ and $\gamma_{k + 1} = \gamma_{k
    +1}^{(t_\text{max})}$. The MSE given by AMP after the $k$-th mini-batch
    has been processed is given by ${\cal E}_k$. In the Bayes-optimal case,
    in particular, one can further show that the state evolution reduces to
    \begin{equation}
        \begin{aligned}
            \lambda_{k}^{(t)} &= \lambda_{k - 1} + \frac{\alpha_\text{b}}{\Delta + {\cal E}_{k}^{(t)}}, \quad \quad 
            {\cal E}_{k}^{(t + 1)} &= \mathbb{E}_{x, z} \Big( \eta \big( \lambda_{k}^{(t)}, \lambda_{k}^{(t)} x + \sqrt{\lambda_{k}^{(t)}} z \big) - x \Big)^2. \,
        \end{aligned}
        \label{eq:minise_nish}
    \end{equation}
\end{theorem}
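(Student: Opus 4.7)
The approach is to reduce Mini-AMP, mini-batch by mini-batch, to a standard offline AMP iteration with a suitable effective prior, and then invoke the Bayati--Montanari state evolution \cite{bayati_dynamics_2011} on each batch, closing the argument by induction on $k$. First I would observe that within a single mini-batch, Mini-AMP is literally offline AMP on the subsystem with $M_b$ samples and the separable effective prior of \eqref{eff-prior}: comparing \eqref{eq:amp3} with \eqref{eq:miniamp}, the only modification is that the denoising functions $\eta,\eta'$ are evaluated with the shifted first argument $\Lambda_{k-1}+A_k^{(t)}$ and shifted second argument $\Theta_{k-1,i}+B_{k,i}^{(t)}$, which is precisely the effect of multiplying $P_X$ by $\exp(-\tfrac12 \Lambda_{k-1}x_i^2 + \Theta_{k-1,i} x_i)$. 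Since the $k$-th batch uses an independent block of $M_b$ rows of $\Phi$ (iid Gaussian with variance $1/N$), applying \eqref{bayati} conditionally on $(\Lambda_{k-1},\Theta_{k-1,\cdot})$ gives, with $\alpha \to \alpha_b$, that $A_k^{(t)}\rightsquigarrow \alpha_b/(\Delta+{\cal V}_k^{(t)})$ and that $B_{k,i}^{(t)}$ is asymptotically Gaussian with mean $A_k^{(t)} x_i$ and variance $\alpha_b(\Delta_0+{\cal E}_k^{(t)})/(\Delta+{\cal V}_k^{(t)})^2$.

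Next I would close the induction on $k$. The induction hypothesis is that $\Lambda_{k-1}\to\lambda_{k-1}$ deterministically and that, conditionally on $x_i$, $\Theta_{k-1,i}$ is asymptotically ${\cal N}(\lambda_{k-1}x_i,\gamma_{k-1})$, jointly independent of the $k$-th block of rows. Combining with the in-batch step above, the sum $\Theta_{k-1,i}+B_{k,i}^{(t)}$ is asymptotically Gaussian with mean $(\lambda_{k-1}+A_k^{(t)})x_i = \lambda_k^{(t)}x_i$ and variance $\gamma_{k-1}+\alpha_b(\Delta_0+{\cal E}_k^{(t)})/(\Delta+{\cal V}_k^{(t)})^2 = \gamma_k^{(t)}$. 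Taking expectations of $\eta'$ and of $(\eta-x)^2$ under this Gaussian gives exactly the expressions in \eqref{eq:minise} for ${\cal V}_k^{(t+1)}$ and ${\cal E}_k^{(t+1)}$. Evaluating at $t=t_{\max}$ defines $\lambda_k,\gamma_k$ and transfers the hypothesis from $k-1$ to $k$; the base case $\Lambda_0=0,\Theta_{0,i}=0$ is immediate, with $\gamma_0=\lambda_0=0$. For the Bayes-optimal reduction to \eqref{eq:minise_nish}, I would verify by induction on $(k,t)$ that ${\cal V}_k^{(t)}={\cal E}_k^{(t)}$ and $\gamma_k^{(t)}=\lambda_k^{(t)}$. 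The first identity is the Nishimori property \cite{zdeborova_statistical_2016} applied to the effective posterior, which remains matched to the true generative law because a Gaussian tilt of the prior preserves Bayes-optimality. The second then follows by substitution into the $\gamma_k^{(t)}$ recursion using $\Delta=\Delta_0$ and the inductive $\gamma_{k-1}=\lambda_{k-1}$, making the two recursions collapse.

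The main obstacle will be the conditioning step in the second paragraph. The Bayati--Montanari theorem is stated for a fixed, deterministic separable nonlinearity, whereas here the effective prior at batch $k$ is random, with parameters $(\Lambda_{k-1},\Theta_{k-1,\cdot})$ that depend on all past batches. The cleanest way around this is to condition on the $\sigma$-algebra generated by the first $k-1$ batches: the $k$-th block of $\Phi$ is independent of this $\sigma$-algebra and remains iid Gaussian conditionally, and the effective prior is measurable with respect to it, so \cite{bayati_dynamics_2011} applies conditionally and the claimed distributional convergence then holds unconditionally. An alternative, and perhaps more robust, route is to embed the whole Mini-AMP schedule as a single large AMP iteration on the full matrix $\Phi$ with a composite state that tracks the accumulated $(\Lambda_\ell,\Theta_\ell)$ variables, to which a single application of the state evolution theorem then yields \eqref{eq:minise} directly.
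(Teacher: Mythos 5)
Your proposal follows essentially the same route as the paper's own (sketched) argument: treat each mini-batch as an instance of offline AMP on an independent block of $\Phi$ with the tilted effective prior \eqref{eff-prior}, invoke the state evolution of \cite{bayati_dynamics_2011} per batch, propagate the Gaussian statistics of $(\Lambda_{k-1},\Theta_{k-1,\cdot})$ by induction on $k$, and obtain \eqref{eq:minise_nish} from the Nishimori identities. Your added care about conditioning on the past batches (so that the per-coordinate, data-dependent denoiser is covered by the converging-sequences form of the Bayati--Montanari theorem) is a correct and welcome elaboration of a step the paper leaves implicit.
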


\begin{proof}
    We apply the proof of state evolution for AMP in
    \cite{bayati_dynamics_2011} to each mini-batch step, each with its own
    denoiser function $\eta(.)$ Each step is an instance of AMP with a new,
    independent matrix, and an effective denoiser given by
    \eqref{eq:miniamp}. Using \eqref{bayati}, the statistics of the
    denoisers are known, and the application of the standard AMP state
    evolution leads to \eqref{eq:minise}. The Bayes-optimal case
    \eqref{eq:minise_nish} then follows by induction, as in Sec. V.A.2 of
    \cite{kabashima_phase_2014}.
 \end{proof}
 
Note that the above Theorem holds for any value of $t_{\rm max}$. Hence,
even a stochastic version of the Mini-AMP algorithm, where for every
mini-batch one only performs a few iterations without waiting for
convergence in order to further speed up the algorithm, is analyzable using
the above state evolution.

\subsection{Optimal MMSE and mutual information under mini-batch setting}

\begin{theorem}[Mutual information for each mini-batch]\label{thm2} 
    For a random matrix $\Phi$, where each element is taken i.i.d. from
    ${\cal {N}}(0,1/N)$, in the Bayes-optimal setting, assume one has been
    given, after $k-1$ mini-batches, a noisy version ${\bm r}$ of unknown
    signal ${\bm x}$ with i.i.d noise ${\cal N}(0,\lambda^{-1})$. Given a
    new mini-batch with $\alpha_b\!=\!O(1)$, the mutual information per
    variable between the couple $({\bm r},{\bm y})$ and the unknown ${\bm
    x}$ is asymptotically given by $i={\rm min}~{i}^{b}_\text{RS} ({\cal
    E}_k)$ where, defining $\Sigma_b^{-2}(\lambda,{\cal E}_k) \equiv
    \lambda+\frac{\alpha_b}{\Delta+{\cal E}_k}$,
    \begin{equation}
        {i}^{b}_{\rm RS} ({\cal E}_k) = \frac{\alpha_b}{2} \left[ \frac{{\cal
        E}_k}{\Delta + {\cal E}_k} + \log{\left(1+ \frac{{\cal
        E}_k}{\Delta}\right)} \right] - \mathbb{E}_{x, z}  \log
        \mathbb{E}_{\tilde x} e^{-\frac{\left({\tilde x}-\left({x} + {z}
        \Sigma_b(\lambda,{\cal E}_k)\right)\right)^2}{2 \Sigma_b^2(\lambda,{\cal
        E}_k)}}-\frac {1+\alpha_b}2.
        \label{eq:miniphi}
    \end{equation}
\end{theorem}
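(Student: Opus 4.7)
The plan is to treat this as a two-channel Bayesian inference problem --- a per-coordinate Gaussian scalar channel giving $\bm{r}$ and the usual random-matrix GLM channel giving $\bm{y}$ --- and to reduce it to the offline setting already analyzed in \eqref{eq:phi}. Using the chain rule $I(\bm{X};\bm{R},\bm{Y}) = I(\bm{X};\bm{R}) + I(\bm{X};\bm{Y}\mid\bm{R})$, the second term is the mutual information of a GLM in which the side information $\bm{r}$ acts as a disorder-dependent prior, while the first is a trivial single-letter scalar computation. I would apply the rigorous replica-symmetric mutual information formula of \cite{barbier2017mutual_last, reeves_replica-symmetric_2016, miolane2017fundamental} --- of which \eqref{eq:phi} is the offline instance --- to this conditional GLM problem and then assemble the two contributions.

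Concretely, conditional on $\bm{r}$ the full posterior of $\bm{x}$ factorizes as $\prod_i P_X(x_i) e^{-\lambda(r_i - x_i)^2/2} \cdot \prod_\mu \mathcal{N}(y_\mu;\bm{\Phi}_\mu\!\cdot\!\bm{x},\Delta)$, which is precisely a GLM with modified prior $\tilde P(x_i\mid r_i)\propto P_X(x_i) e^{-\lambda(r_i - x_i)^2/2}$. Plugging this effective prior into the offline replica formula and averaging over $\bm{r} = \bm{x} + \mathcal{N}(0,\lambda^{-1}I)$, the scalar channel appearing inside the log-expectation in \eqref{eq:phi} acquires an additional precision $\lambda$: the relevant effective noise becomes Gaussian with precision $\Sigma_b^{-2}(\lambda,\mathcal{E}_k) = \lambda + \alpha_b/(\Delta+\mathcal{E}_k)$, which replaces the offline $\alpha_b/(\Delta+\mathcal{E}_k)$. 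The $\frac{\alpha_b}{2}[\cdot]$ term is inherited unchanged from the mini-batch of $M_b = \alpha_b N$ GLM observations; the remaining additive constants --- contributed by the Gaussian normalizations of those $M_b$ observations together with the scalar-channel free energy from $I(\bm{X};\bm{R})$ --- should combine to the stated $-(1+\alpha_b)/2$. As in the offline case, the minimization over $\mathcal{E}_k$ then reflects the fact that the replica bound is attained at the Bayes-optimal MMSE, the (Nishimori-compatible) global minimizer of $i^b_\text{RS}$.

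The main obstacle is the rigorous justification of the replica formula when the prior $\tilde P(\cdot\mid r_i)$ is itself random and coordinate-dependent. The cleanest route is to extend the adaptive interpolation method of \cite{barbier2017mutual_last, miolane2017fundamental} by running their interpolation on the GLM part while keeping a time-independent scalar side-information channel of precision $\lambda$ present throughout the interpolating Hamiltonian. Since the side information is a product-form perturbation that does not couple coordinates across $i$, the sub-additivity, Guerra-type convexity, and overlap concentration estimates driving that proof go through essentially verbatim, with $\lambda$ simply entering additively in the effective scalar precision, as required by \eqref{eq:miniphi}.
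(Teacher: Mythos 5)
Your proposal is sound and would yield the theorem, but it is organized differently from the paper's argument. You decompose $I(\bm X;\bm R,\bm Y)=I(\bm X;\bm R)+I(\bm X;\bm Y\mid \bm R)$, recast the conditional term as an offline GLM with the random, site-factorized effective prior $\tilde P(x_i\mid r_i)\propto P_X(x_i)e^{-\lambda (r_i-x_i)^2/2}$, and then invoke (an extension of) the rigorous replica formula for that modified model; the combination of the two independent Gaussian observations of $\tilde x$ correctly produces the precision $\Sigma_b^{-2}=\lambda+\alpha_b/(\Delta+\mathcal E_k)$, and your reading of where the constants come from is right. The paper instead avoids setting up any new interpolation: it observes that the two-channel quantity $\mathbb E_{\bm y,\bm r}\log\mathbb E_{\bm x}e^{-\|\bm y-\Phi\bm x\|^2/2\Delta}e^{-\lambda\|\bm r-\bm x\|^2/2}$ is \emph{literally} the free energy at an intermediate time $\tau$ of the already-existing Guerra path of \cite{barbier2017mutual_last} (which interpolates between a pure scalar channel and the pure linear channel), so the sum rule there gives $i^b_{\rm RS}(\mathcal E_k)-\int_0^\tau R_{\mathcal E_k}(t)\,dt$ directly, and the partial remainder integral is killed by combining non-negativity of $R$ with the known offline fact that $\int_0^1 R_{\mathcal E^*}(t)\,dt=0$ at the minimizer. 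The trade-off: the paper's route needs no new concentration or convexity estimates at all, whereas yours requires re-verifying the overlap-concentration machinery for a disorder-dependent prior (the step you flag as the ``main obstacle''). That step is standard --- i.i.d.\ scalar side information is exactly the kind of perturbation these proofs already tolerate, cf.\ \cite{miolane2017fundamental} --- so your argument is not gapped, merely longer; in exchange it is more modular and would survive even if the reference's interpolation path did not happen to pass through the two-channel model. One small point to tighten: the claim that the additive constants ``should combine to $-(1+\alpha_b)/2$'' deserves the one-line computation $H(\bm R\mid\bm X)+H(\bm Y\mid\bm X)=\frac N2\log(2\pi e\lambda^{-1})+\frac{M_b}2\log(2\pi e\Delta)$ against the corresponding normalizations you absorb into the log-partition terms.
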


\begin{proof}
    The proof is a slight generalization of the Guerra construction in
    \cite{barbier2017mutual_last}. Using properties of the Shannon entropy,
    the mutual information can be written as
    \begin{equation}
        I(Y,R;X) = H(Y,R)- H(Y|X) - H(R|X) = {\mathbb{E}}_{\bm y,{\bm r}} \log
        {\mathbb{E}}_{\bm x} {e^{-\frac{\|{\bm y}-\Phi {\bm x}
        \|^2_2}{2\Delta}}} {e^{-\frac{\|{\bm r}-{\bm x} \|^2_2}{2\lambda^{-1}}}}
        - \frac {1+\alpha_b}2 \nonumber.
    \end{equation}
    Computation of this expectation is simplified by noticing that it
    appears as equation (41) in the Guerra construction of
    \cite{barbier2017mutual_last}, where it was used as a proof method for
    the offline result by interpolating from a pure noisy Gaussian channel
    (at "time" $t\!=\!0$) to the actual linear channel (at "time"
    $t\!=\!1$). Authors of \cite{barbier2017mutual_last} denoted
    $\lambda(t)$ as the variance of the Gaussian channel and $\gamma(t)$ as
    the variance of the linear channel. Our computation corresponds instead
    to a "time" $0 \le \tau \le 1$ where both channels are used. Using
    Sec.~V of \cite{barbier2017mutual_last}, with the change of notation
    $\gamma(\tau)\!\to\!\Delta$ and $\lambda(\tau)\!\to\!\lambda$, we reach
    \begin{equation}
        \lim_{N \to \infty} \frac{I(Y,R;X)}N = {i}^{b}_\text{RS} ({\cal E}_k) -
        \int_0^{\tau} R_{{\cal E}_k}(t) ~~{\rm d}t + O(1),
    \end{equation}
    where $0 \le {\tau} \le 1$ and $R_{{\cal E}_b}(t)$ a non-negative
    function called the reminder. The validity of the mutual information
    formula in the offline situation
    \cite{barbier2017mutual_last,reeves_replica-symmetric_2016} implies that
    the integral of the reminder in $[0,1]$ is zero when ${\cal E}^* =
    \rm{argmin} \,~ {i}^{b}_\text{RS} ({\cal E}_b) $. Since $R(t)$ is
    non-negative, this implies that it is zero almost everywhere, thus
    $\int_0^{\tau} R_{{\cal E}^*_b}(t)~~{\rm d}t=0$. 
 \end{proof}

\begin{theorem}[MMSE for each mini-batch]\label{thm3}
    With the same hypothesis of Theorem \ref{thm2}, the MMSE when one has access
    to a noisy estimate with i.i.d. noise ${\cal N}(0,\lambda^{-1})$ and the
    data from the mini-batch at step $k$, is
    \begin{equation}
        \rm{MMSE} = \rm{argmin} ~ i_{\rm RS}^b({\cal E}_{k})\,.
    \end{equation}
\end{theorem}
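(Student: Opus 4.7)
The plan is to combine Theorem~\ref{thm2} with the I-MMSE identity of Guo--Shamai--Verd\'u. Since the auxiliary observation ${\bm r} = {\bm x} + {\cal N}({\bm 0}, \lambda^{-1} {\bm I})$ is a coordinate-wise Gaussian channel of SNR $\lambda$, applying I-MMSE to the joint posterior of ${\bm x}$ given $({\bm y},{\bm r})$ yields, for each finite $N$,
\begin{equation}
    \frac{1}{N}\,\frac{\partial I(Y, R; X)}{\partial \lambda} = \frac{1}{2}\,{\rm MMSE}_N(\lambda),
    \label{eq:immse_proof3}
\end{equation}
where ${\rm MMSE}_N$ is the per-coordinate MMSE for $\bm x$ when both observations are available.

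By Theorem~\ref{thm2}, the $N\!\to\!\infty$ limit of the left-hand side of \eqref{eq:immse_proof3} equals $\frac{1}{2}\partial_\lambda \min_{{\cal E}_k} i^b_{\rm RS}({\cal E}_k)$, provided that differentiation commutes with the limit. This is standard: $I(Y,R;X)/N$ is monotone and concave in $\lambda$ as a function of the SNR of the side channel, so its limit inherits these properties and the convergence of derivatives holds for Lebesgue-almost every $\lambda$. At any global minimizer ${\cal E}^*(\lambda)$ of $i^b_{\rm RS}$, the envelope theorem collapses the outer derivative to a partial one:
\begin{equation}
    \partial_\lambda \min_{\cal E} i^b_{\rm RS}({\cal E}) = \partial_\lambda i^b_{\rm RS}({\cal E})\big|_{{\cal E}={\cal E}^*}.
    \label{eq:envel_proof3}
\end{equation}

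To finish, I would compute the right-hand side of \eqref{eq:envel_proof3} in closed form. In \eqref{eq:miniphi} only the final expectation depends on $\lambda$, through $\Sigma_b^{-2}(\lambda,{\cal E}) = \lambda + \alpha_b/(\Delta+{\cal E})$, and that expectation is exactly the free energy of a scalar Gaussian channel with SNR $\Sigma_b^{-2}$ and prior $P_X$. A second, scalar application of I-MMSE identifies $\partial_{\Sigma_b^{-2}}$ of this term with one half of the scalar MMSE of the effective channel, and the chain rule gives $\partial_\lambda$ of it. The stationarity condition $\partial_{\cal E} i^b_{\rm RS}({\cal E}^*) = 0$ is precisely the Nishimori-type fixed-point equation that forces the scalar MMSE at SNR $\Sigma_b^{-2}(\lambda,{\cal E}^*)$ to coincide with ${\cal E}^*$. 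Substituting back into \eqref{eq:immse_proof3} then yields ${\rm MMSE} = {\cal E}^*(\lambda)$, as claimed.

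The main obstacle is the derivative-limit exchange at isolated values of $\lambda$ where $i^b_{\rm RS}$ possesses more than one global minimizer --- i.e., at first-order phase transitions. Away from such points, concavity of $\lambda \mapsto I(Y,R;X)/N$ delivers the needed convergence of one-sided derivatives; at the transitions, the MMSE itself jumps together with $\arg\min\, i^b_{\rm RS}$, exactly as in the offline analyses \cite{barbier2017mutual_last,reeves_replica-symmetric_2016}. Once this measure-theoretic subtlety is dispatched, the remainder of the proof is the standard differentiation-of-mutual-information trick, now applied at a single mini-batch step rather than to the fully offline channel.
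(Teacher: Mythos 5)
Your argument is correct, and it reaches the result by a genuinely different instantiation of the I-MMSE machinery than the paper does. The paper differentiates the mutual information of Theorem~\ref{thm2} with respect to the inverse noise $\Delta^{-1}$ of the \emph{linear measurement channel}, which by I-MMSE yields the MMSE on the observables $z_\mu = \bm{\Phi}_\mu\cdot\bm{x}$ (the ``y-MMSE''), and then converts that into the MMSE on $\bm{x}$ via the y-MMSE formula $y_{\rm MMSE} = {\rm MMSE}/(1+\Delta\,{\rm MMSE})$. You instead differentiate with respect to the SNR $\lambda$ of the \emph{Gaussian side channel} $\bm r$, which hands you the per-coordinate MMSE on $\bm x$ directly and dispenses with the y-MMSE conversion; the price is that you must then carry out the envelope-theorem step and verify that the stationarity condition $\partial_{\cal E}\, i^b_{\rm RS}=0$ identifies the scalar-channel MMSE at SNR $\Sigma_b^{-2}(\lambda,{\cal E}^*)$ with ${\cal E}^*$ itself --- which is exactly the statement, made separately in the paper right after Theorem~\ref{thm3}, that the extrema of $i^b_{\rm RS}$ in \eqref{eq:miniphi} coincide with the fixed points of the state evolution. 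Both routes share the same foundations (the Guerra interpolation behind Theorem~\ref{thm2} and concavity/monotonicity of mutual information in the relevant channel parameter to justify exchanging limit and derivative), and both degrade identically at first-order transition points where the minimizer is non-unique, a caveat you correctly flag. Your version is arguably the more self-contained of the two, since it never needs the relation between the MMSE on $\bm z$ and the MMSE on $\bm x$; the paper's version has the advantage of reusing, essentially verbatim, the offline I-MMSE computation of \cite{barbier2017mutual_last}.
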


\begin{proof}
    The proof follows again directly from generic results on the Guerra
    interpolation in \cite{barbier2017mutual_last} and the so-called I-MMSE
    formula ${\rm d}i(\Delta)/{\rm d}\Delta^{-1}=\alpha_b y_{\rm MMSE} /2$
    and y-MMSE formula $y_{\rm MMSE}={\rm MMSE}/(1+\Delta \, {\rm MMSE})$
    linking the mutual information and the MMSE.
\end{proof}

One can show through explicit computation that the extrema of $i^{b}_{\rm
RS}({\cal E})$ correspond -- just as in the offline case -- to the fixed
points of the state evolution.

Using these results, we can analyze, both algorithmically and information
theoretically, the mini-batch program \eqref{program}. Indeed, the new
information on the parameters $\bm x$ passed from mini-batch $k-1$ to $k$
contained in \eqref{eff-prior} is simply a (Gaussian) noisy version of $\bm
x$ with inverse variance $\lambda_{k-1}$. This is true for the AMP estimate
(see \eqref{eff-prior}) and, in the large $N$ limit, for the exact
marginalized posterior distribution as well (see e.g.
\cite{talagrand2003spin,lelarge2016fundamental,miolane2017fundamental}). The
optimal MSE at each mini-batch is thus given by the recursive application of
Theorem \ref{thm3}, where at each mini-batch $k=1,\ldots, B$ we minimize
\eqref{eq:miniphi} using $\lambda_{k-1}=\lambda_{k-2}+
{\alpha_b}/{(\Delta+\rm{MMSE}_{k-1})}$.

Now we can compare the MSE reached by the Mini-AMP algorithms to the MMSE.
If, for each mini-batch, the MMSE is reached by the state evolution of the
Mini-AMP algorithm starting from the previously reached MSE, then we have
the remarkable result that Mini-AMP performs a {\it Bayes-optimal} and {\it
computationally efficient} implementation of the mini-batch program
\eqref{program}. Otherwise, the Mini-AMP is suboptimal. It remains an open
question whether in that case any polynomial algorithm can improve upon the
MSE reached by Mini-AMP. 

All our results can be directly generalized to the case of AMP for matrix or
tensor factorization, as derived and proven in
\cite{lesieur2017constrained,miolane2017fundamental,lesieur2017statistical}.
They can also be adapted to the case of GLMs with non-linear output
channels. However, in this setting, the formula for the mutual information
has not been yet proven rigorously.

\section{Performance and phase transitions on GLMs}
\label{sec:glms}
\subsection{Optimality \& efficiency trade-offs with Mini-AMP}
\begin{figure}[ht]
	\centering
    \includegraphics[height=5.2cm]{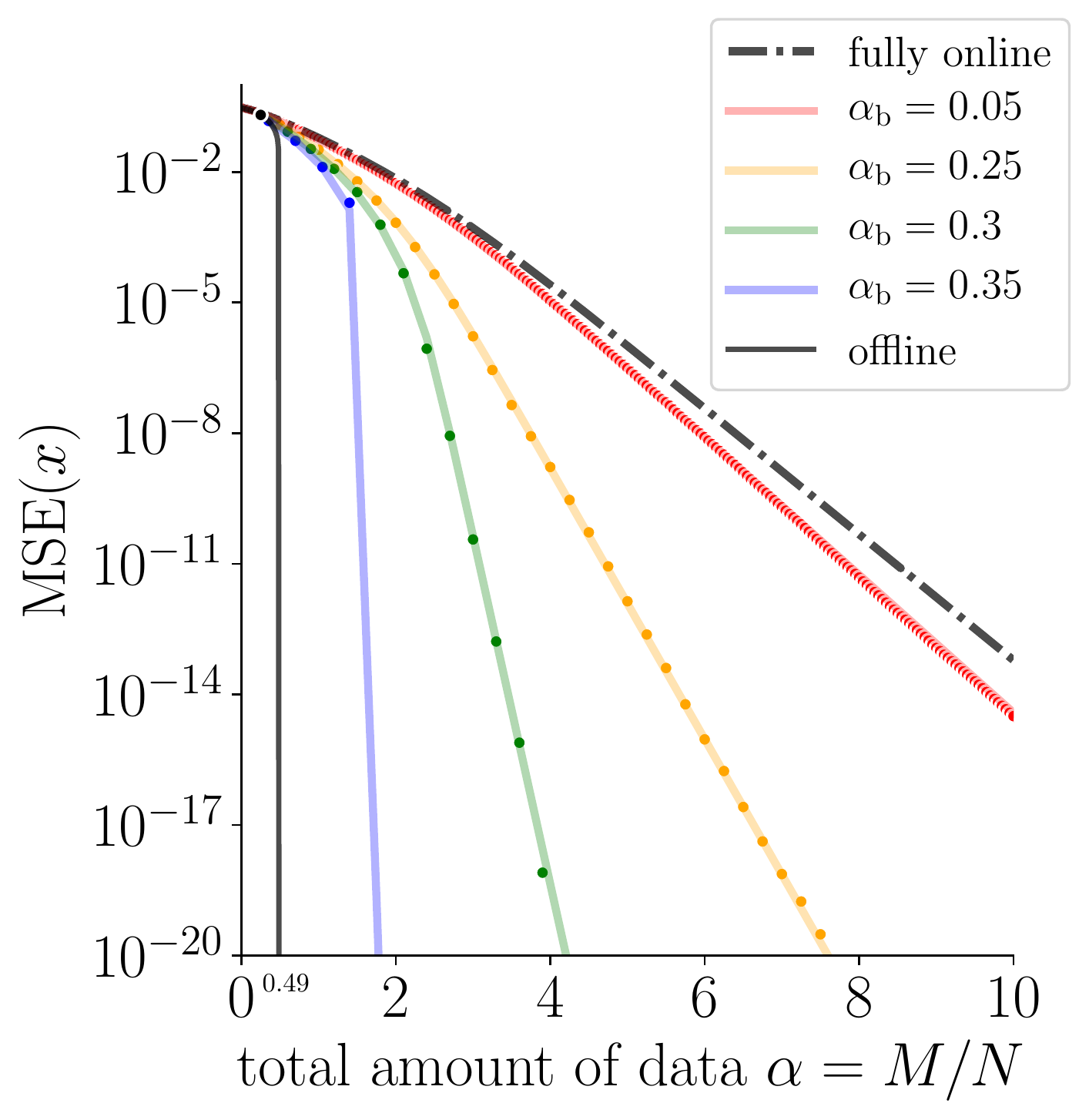}
    \includegraphics[height=5.2cm]{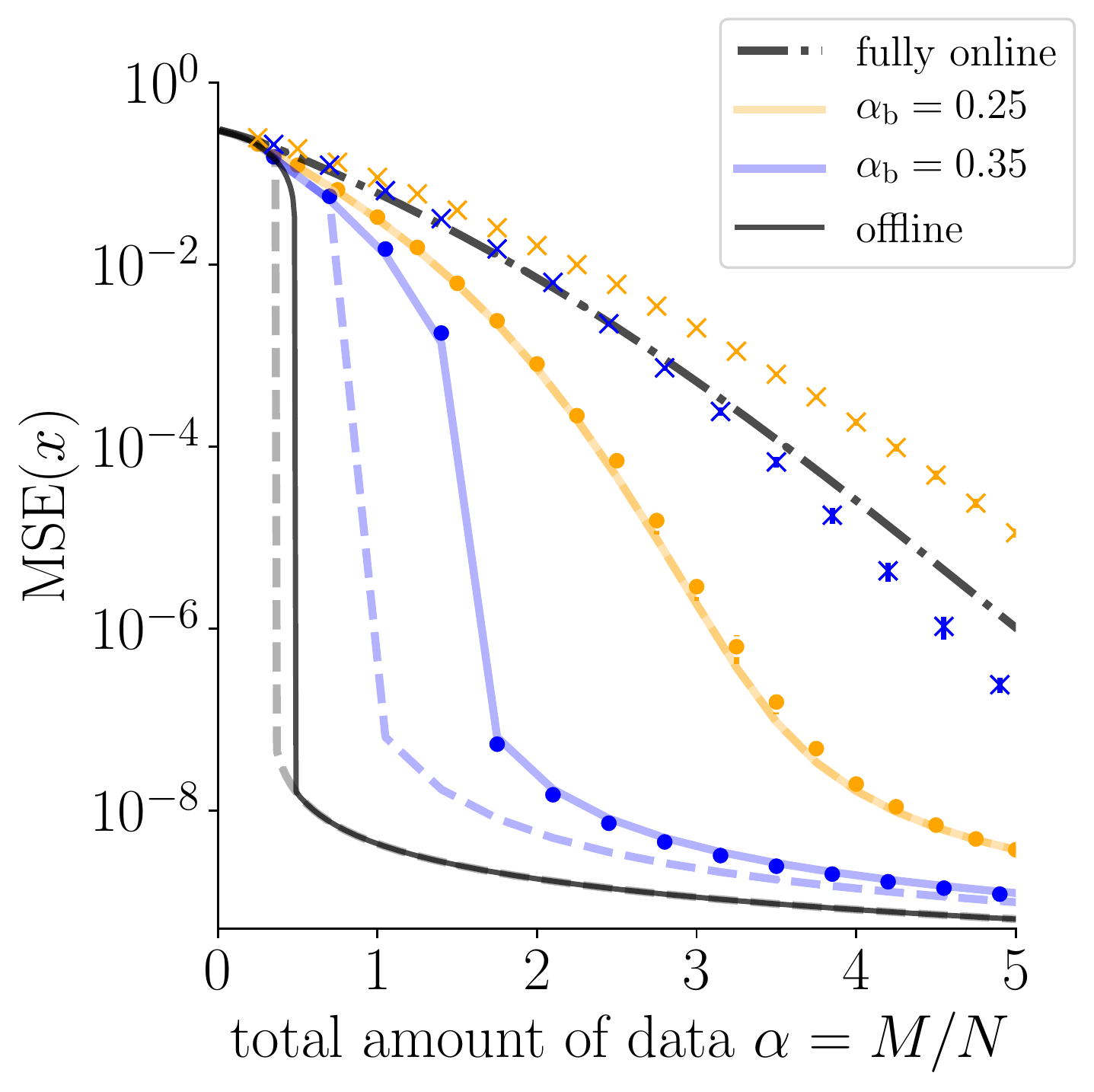}
    \includegraphics[height=5.2cm]{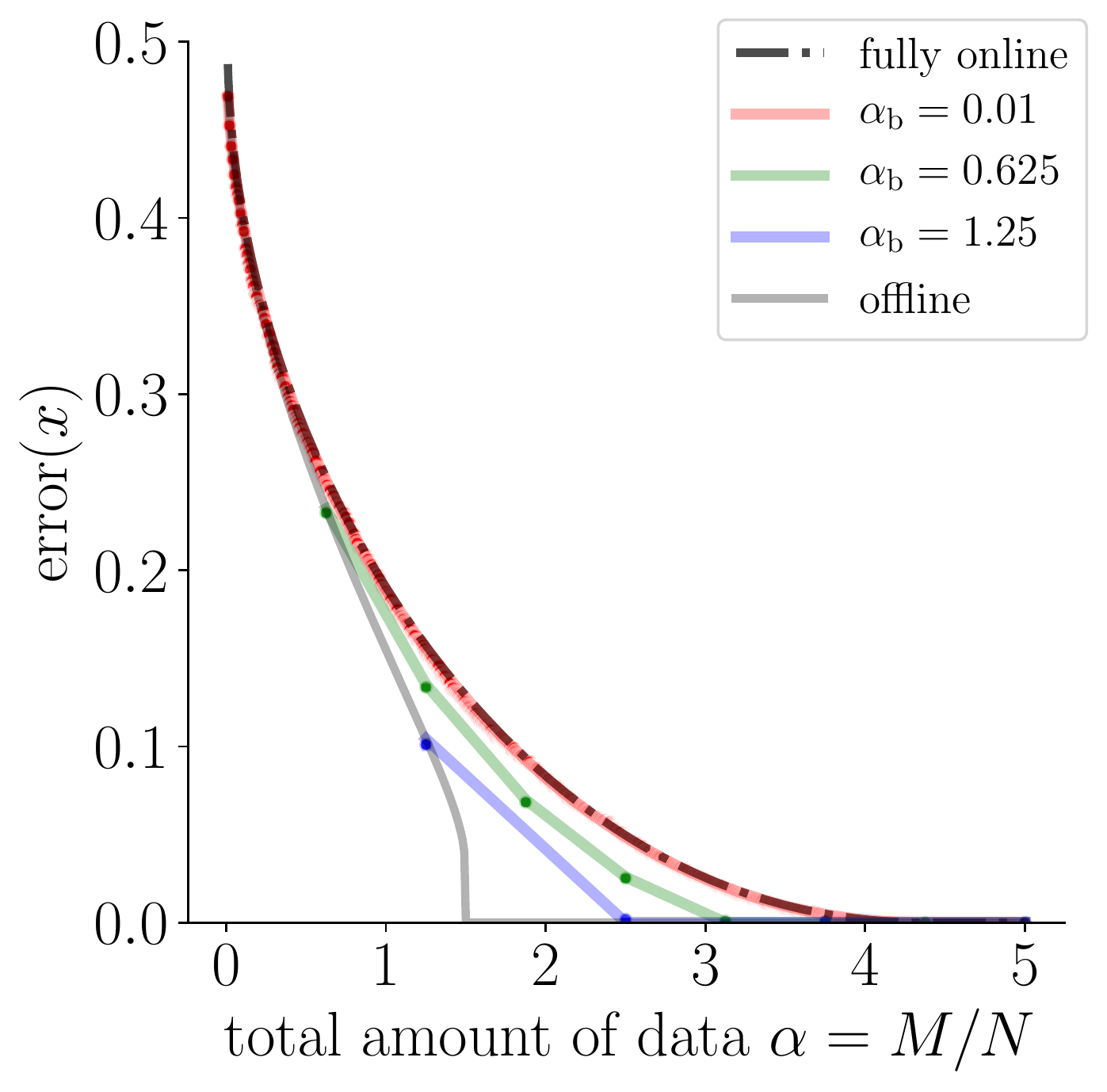}
    \caption{\small
        Accuracy of Mini-AMP inference as a function of the total amount of
        presented data.
        \emph{Left:} SLR with $\Delta = 0$ and sparsity $\rho = 0.3$.
        \emph{Center:} SLR with $\Delta = 10^{-8}$ and $\rho = 0.3$.
        \emph{Right:} Perceptron learning with Rademacher ($\pm 1$)
        parameters/synapses.
        For different mini-batch sizes (\emph{colors}), we show both the
        state evolution predictions for Mini-AMP (\emph{solid lines}), the
        predicted MMSE (\emph{dashed lines}, only center pannel), and
        empirical experiments for Mini-AMP ($\bullet$) and streaming VB
        \cite{broderick_streaming_2013,krzakala_variational_2014} ($\times$)
        averaged over 10 realizations of size $N = 2000$. We also show the
        results for both fully offline (\emph{solid black line}) and fully
        online (\emph{dash-dot black line}) algorithms. Even for moderate
        $N$ the state evolution is found to almost perfectly describe
        Mini-AMP’s behavior. For the parameters of the center plot we
        observed that for $\alpha_b\!\lesssim\!0.33$ Mini-AMP is
        asymptotically optimal.
    }
    \label{fig:tradeoff}
\end{figure}

We now illustrate the above results on some examples. In Figure
\ref{fig:tradeoff}, we consider the SLR model and the perceptron with binary
$\pm1$ parameters, both with random matrices $\Phi \in \mathbb{R}^{M \times
N}$, $\Phi_{\mu i} \sim \mathcal{N} (0, 1/N)$. Our analysis quantifies the
loss coming from using mini-batches with respect to a fully offline
implementation. In the limit of small mini-batch $\alpha_\text{b} \to 0$, we
recover the results of the ADF algorithm which performs fully online
learning, processing one sample at a time
\cite{solla_optimal_1998,rossi_bayesian_2016}. This suggests that the state
evolution accurately describes the behavior of Mini-AMP beyond the
theoretical assumption of $\alpha_b\!=\!O(1)$, even for mini-batches as
small as a single sample. 

The effect of the mini-batch sizes varies greatly with the problem. For the
perceptron with $\pm1$ weights, a zero error is eventually obtained after a
sufficient number of mini-batches have been processed. Moreover, the
dependence on the mini-batch size is mild: while the offline scheme achieves
zero error at $\alpha \approx 1.5$
\cite{gardner_three_1989,gyorgyi_first-order_1990}, the fully online does it
at $\alpha \approx 4.4$ \cite{solla_optimal_1998}, that is, going from
offline to a fully online scheme costs only about three times more data
points. The behavior of the Mini-AMP for SLR shows instead rather drastic
changes with the mini-batch size. The MSE decays smoothly when the
mini-batch size is small. However, as we increase it, a sudden decay occurs
after a few mini-batches have been processed. For the noiseless case
($\Delta = 0$), the study of the state evolution shows that the asymptotic
(in $\alpha$) MSE is given by
\begin{equation}
    \operatorname{MSE}_x (\alpha) \sim e^{-\frac{1}{\alpha_\text{b}} \log (1
    - \frac{\alpha_\text{b}}{\rho} ) \, \alpha},
    \label{eq:asymptotic}
\end{equation}
if $\alpha_\text{b} \leq \rho$, and by 0 otherwise. These results provide a
basis for an optimal choice of mini-batch size. Given the drastic change in
behavior past a certain mini-batch size, one concludes that small
investments in memory might be worthwhile, since they can lead to large
gains in performance.

Finally, we have compared the Mini-AMP scheme with the streaming VB approach
\cite{broderick_streaming_2013} using the mean-field algorithm described in
\cite{krzakala_variational_2014} for SLR. While the mean-field approach is
found to give results comparable to AMP in the offline case in
\cite{krzakala_variational_2014}, we see here that the results are
considerably worse in the streaming problem. In fact, as shown in Figure 1
(center), mean-field can give worse performance than the fully online ADF,
even when processing rather large mini-batches.

\subsection{Phase transitions}

\begin{figure}[ht]
	\centering
    \includegraphics[height=6cm]{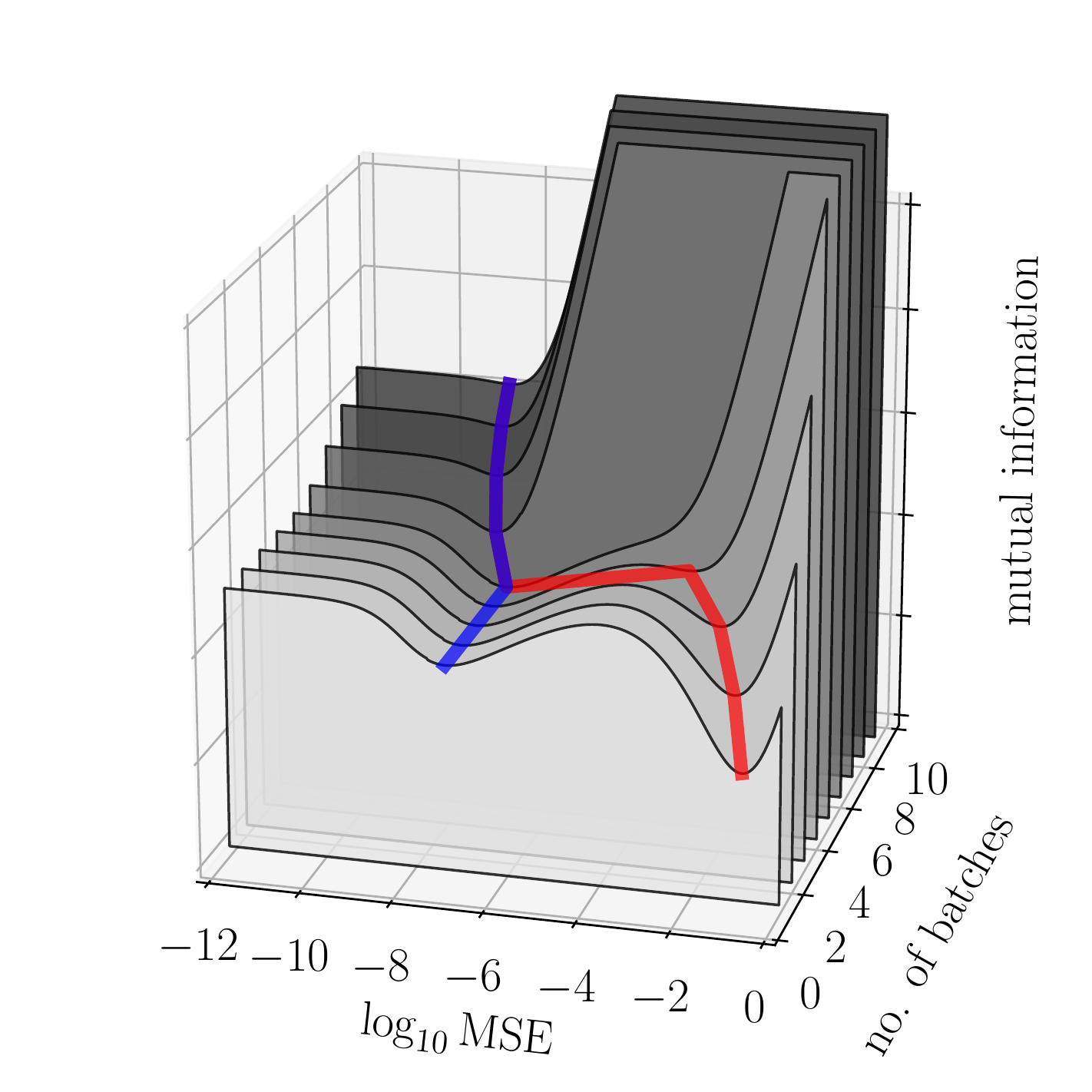}
    \qquad
    \includegraphics[height=6cm]{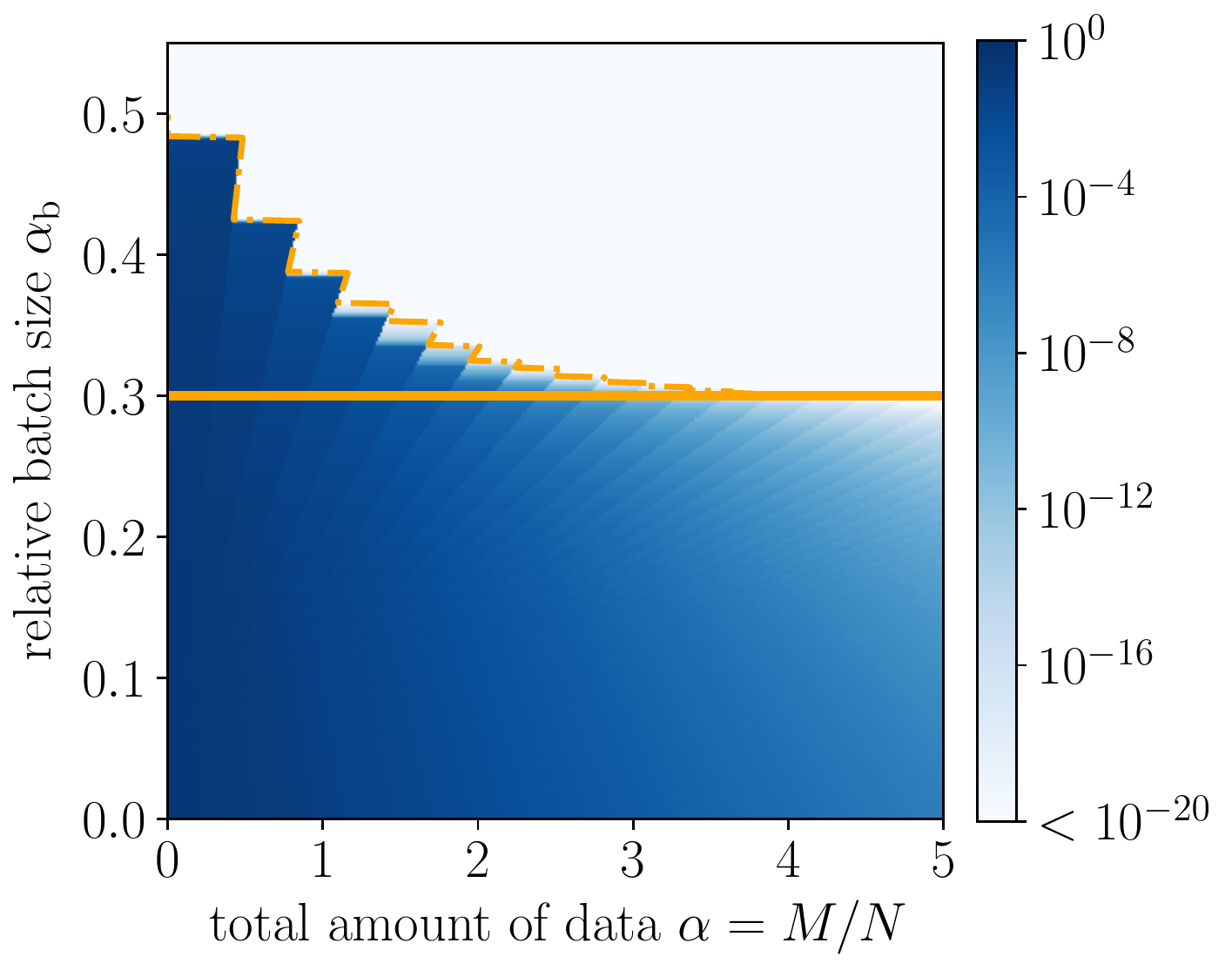}
    \caption{
        \small Phase transitions in streaming SLR.
        \emph{Left}: evolution of the mutual information in the streaming
        SLR problem as each mini-batch is processed. Parameters are set to
        $\rho = 0.3$, $\Delta = 10^{-8}$ and $\alpha_\text{b} = 0.35$.
        \emph{Right}: MSE of Mini-AMP for different mini-batch sizes.
        Mini-AMP achieves the MMSE for $\alpha_b<\rho$. For $\alpha_b>\rho$
        the MMSE is zero after processing a single batch, while for batch
        sizes between $\rho < \alpha_b < 0.49$ the Mini-AMP is suboptimal
        unless a sufficient number of mini-batches is processed.
    }
    \label{fig:phasetrans}
\end{figure}
It turns out that, just as for the offline setting, there are phase
transitions appearing for mini-batch learning, in terms of the learning
error that is achievable information-theoretically (MMSE) {\it or}
computationally efficiently (by Mini-AMP). These can be understood by an
analysis of the function $i^b_{\rm RS}$, since the minimum of $i^b_{\rm
RS}$ gives the MMSE, and since AMP is effectively trying to minimize
$i^b_{\rm RS}$ starting from the MSE reached at the previous mini-batch
steps.

Let us illustrate the reason behind the sharp phenomenon in the behavior of
AMP in Fig.\ref{fig:tradeoff}. We show, in Fig. \ref{fig:phasetrans}
(left), an example of the function $i^b_{\rm RS}({\cal E})$ for the
streaming SLR problem as a function of the MSE ${\cal E}$ as each mini-batch
is being processed. Initially, it presents a ``good'' and a ``bad''
minimum, at small and large MSEs respectively. In the very first batch, AMP
reaches the bad minimum. As more batches are processed, the good minimum
becomes global, but AMP is yet not able to reach it, and keeps returning the
bad one instead. This indicates a computational phase transition, and we
expect that other algorithms will, as AMP, fail to deliver the MMSE in
polynomial time when this happens. Eventually, the good minimum becomes
unique, at which point AMP is able to reach it, thus yielding the sudden
decay observed in Figure \ref{fig:tradeoff}.

Consider now the Bayes-optimal "streaming-MMSE" given by the global minimum
of the mutual information at each step, regardless of whether AMP achieves
it. In the offline noiseless case, the MMSE is achieved by AMP only if the
processed batch has size $\alpha \geq \alpha_{\rm offline} $ or $\alpha\le
\rho$~\cite{krzakala_probabilistic_2012}. In the streaming case, we also
observe that Mini-AMP reaches the streaming-MMSE if the mini-batch size is
sufficiently small {\it or} sufficiently large. In Figure
\ref{fig:phasetrans} (right) we compare MMSE to the MSE reached by Mini-AMP,
with a region between the full and dashed line where the algorithm is
sub-optimal.

\section{Mini-AMP for matrix factorization problems}
\label{sec:lowrank}
\begin{figure}[ht]
    \centering
    \includegraphics[height=7cm]{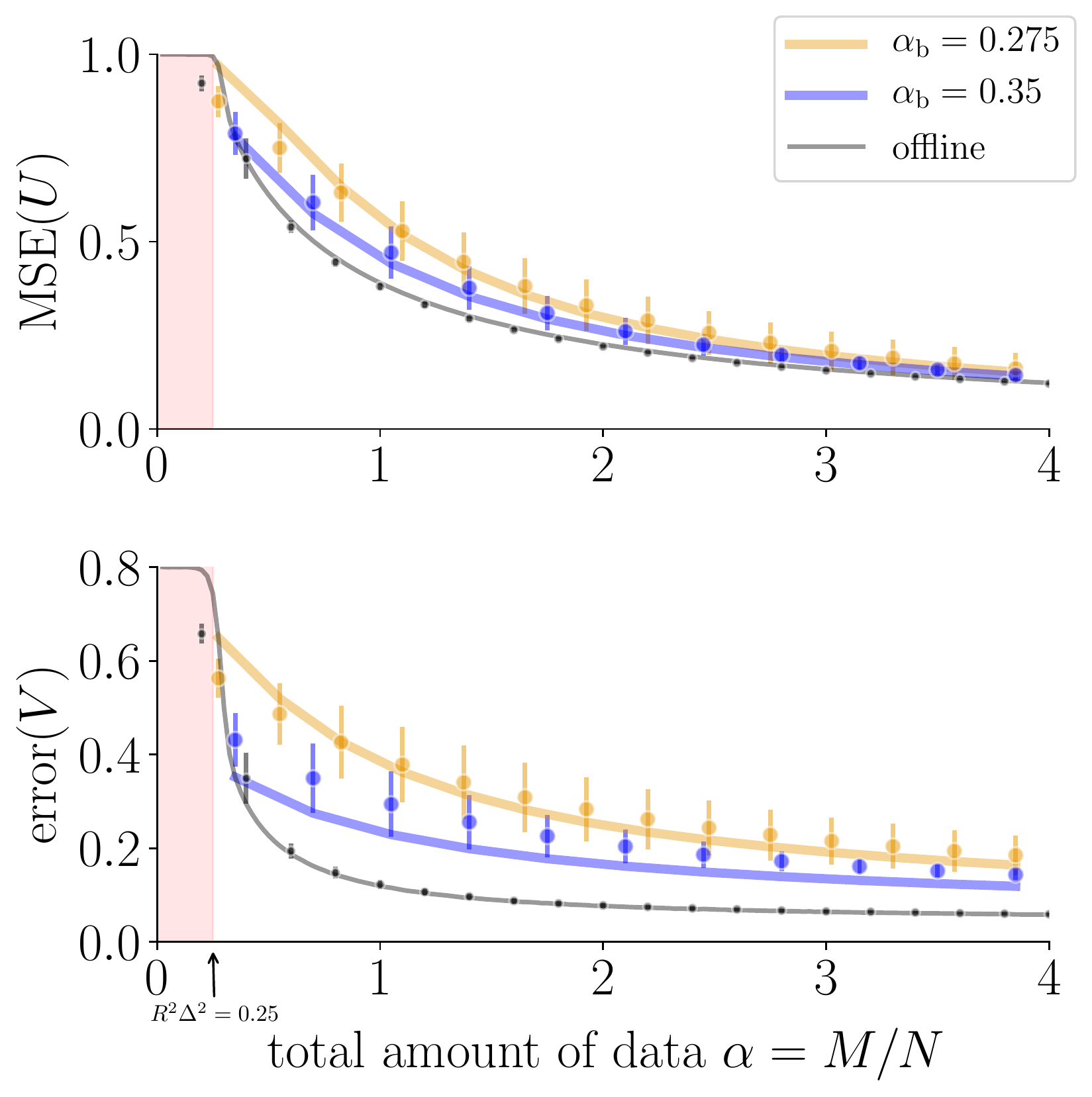}
    \kern3em
    \includegraphics[height=7cm]{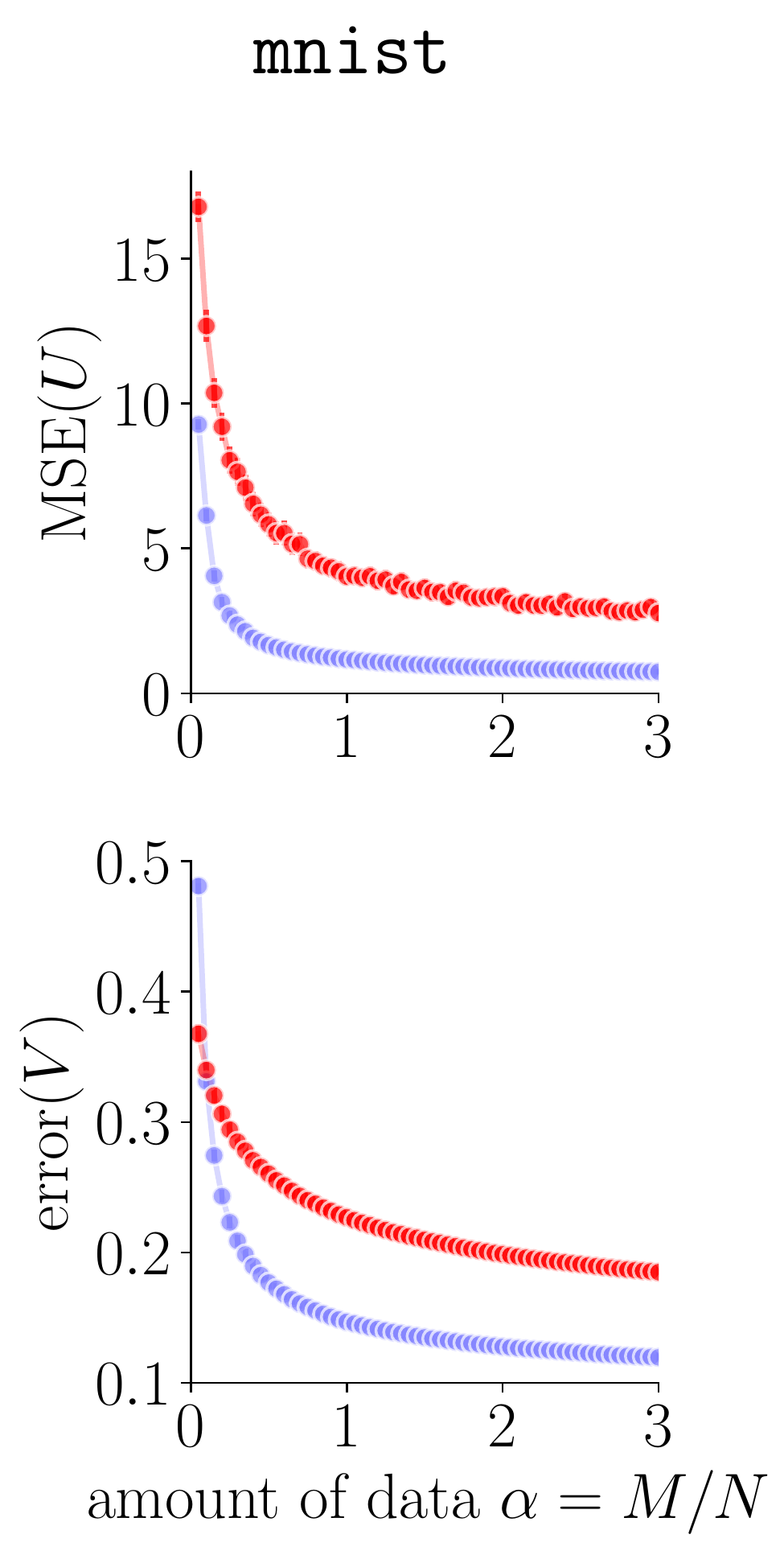}\!
    \includegraphics[height=7cm]{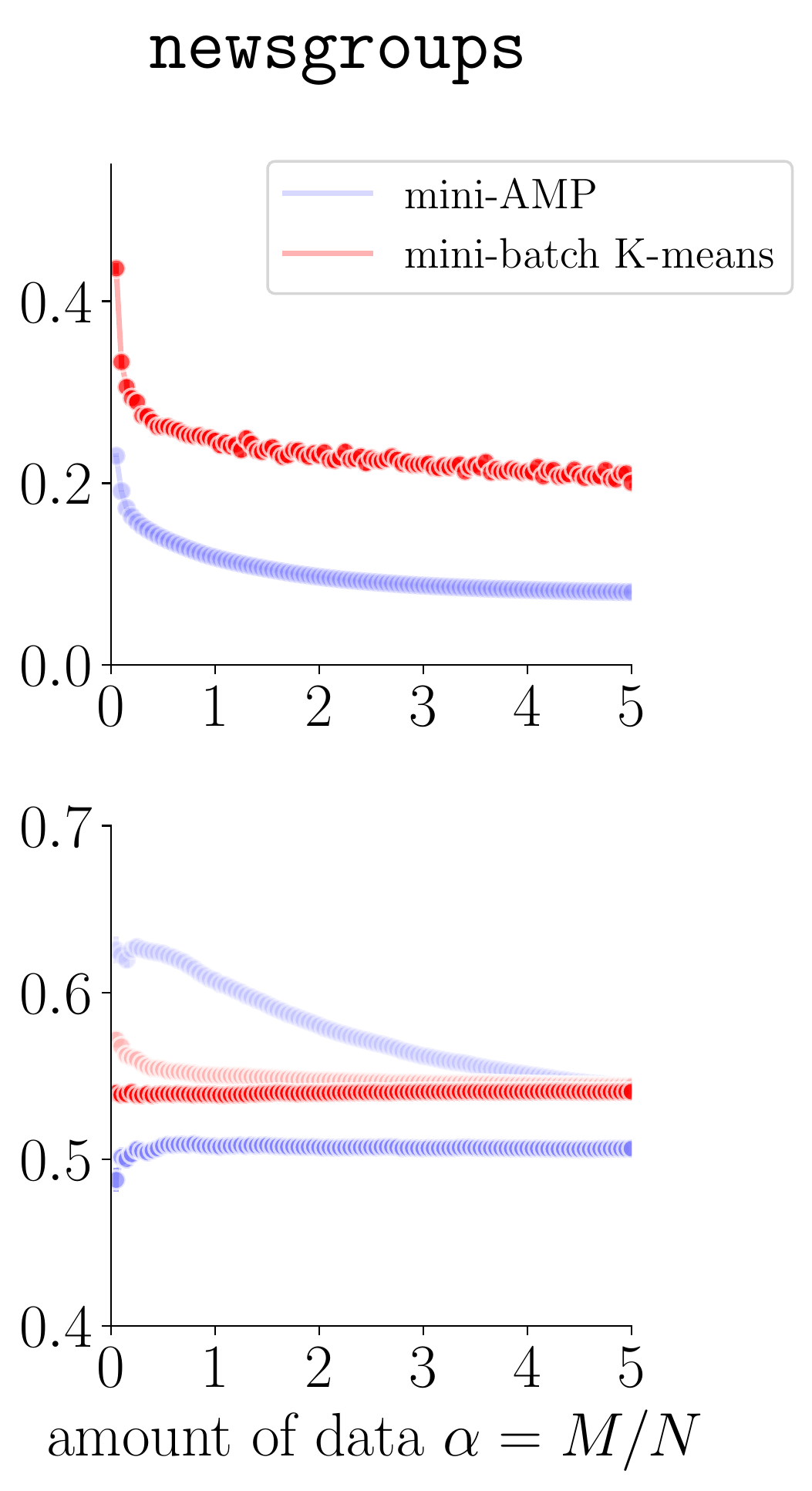}
    \caption{\small Clustering with Mini-AMP on synthetic Gaussian mixture
    data (left) and real-world data (right). 
    \emph{Left}: mean-squared error in $U$ (centroids) and 0-1 loss in $V$
    (labels) using different batch sizes. Solid lines give state evolution
    and symbols give averages over 100 instances of size $N = 1000$. A
    transition at $\alpha_\text{c} = R^2 \Delta^2 = 0.25$ prevents Mini-AMP
    from giving non-zero overlap when $\alpha_\text{b} <
    \alpha_\text{c}$. Parameters are set to $R = 5$, $\Delta = 0.1$. 
    \emph{Right}: clustering on MNIST and the 20 newsgroups dataset using
    Mini-AMP for model (\ref{eq:lowrank}), with the prior on $U$ replaced with a
    non-negative Gaussian of mean zero and variance $0.1$, and the noise
    variance $\Delta$ estimated from the data. On MNIST, digits of size $N
    = 784$ were clustered on $K = 3$ classes (0, 1 and 2), whereas for the
    20 newsgroups dataset, frequency statistics of $N = 1000$ words and $K =
    3$ top-level hierarchies (\texttt{comp}, \texttt{rec}, and \texttt{sci})
    were used. Batch sizes were set so that $\alpha_b = 0.05$. Blue/red
    circles give the cumulative performance of Mini-AMP and mini-batch
    K-means \cite{sculley_web-scale_2010}, respectively, averaged over a 100
    different orders of presentation. For the newsgroups dataset, a 2nd pass
    over the data was performed so that all labels could be recomputed with
    accurate estimation of the centroids; results of 1st and 2nd passes are
    shown in light/dark lines respectively.}
\label{fig:clust_msealpha}
\end{figure}

We now consider the case of low-rank matrix factorization, and in particular
clustering using the Gaussian mixture model (GMM) with $R$ clusters. For
such problems, the generative model reads
\begin{equation}
    P (U, V, Y) = \prod_{ij} \mathcal{N} (Y_{ij} ; \bm{U}_i \cdot \bm{V}_j,
    \Delta) \prod_{i = 1}^N \mathcal{N} (\bm{U}_i; \bm{0}, \mathbb{I}_R)
    \prod_{j = 1}^M \frac{1}{R} \sum_{k = 1}^R \delta (\bm{V}_j -
    \mathbbm{e}_k),
    \label{eq:lowrank}
\end{equation}
where $\bm{U}_i$ and $\bm{V}_j$ give the $i$-th row of $U$ and $j$-th row of
$V$ respectively. Each of the $R$ columns of $U$ describe the mean of a
$N$-variate i.i.d. Gaussian, and $V$ has the role of picking one of these
Gaussians. Finally, each column of $Y$ is given by the chosen column of $U$
plus Gaussian noise. In clustering, these are the data points, and the
objective is to figure out the position of the centroids as well as the
label assignment, given by the columns of $U$ and the rows of $V$
respectively. In the streaming setting the columns of the matrix $Y$ are
arriving in mini-batches. The offline AMP algorithm, its state evolution,
and corresponding proofs are known for matrix factorization from
\cite{rangan_iterative_2012,matsushita_low-rank_2013,lesieur_phase_2016,lesieur2017constrained,barbier_mutual_2016-1,miolane2017fundamental}.
The Mini-AMP is obtained by adjusting the update of the estimators using
\eqref{eq:miniamp}.

In GMM clustering with prior on U having zero mean, there is an interesting
"undetectability" phase transition for $R \le 4$. If the number of samples
is such that $\alpha = M/N < \alpha_c = R^2 \Delta^2$, then the Bayes
optimal posterior asymptotically does not contain any information about the
ground truth parameters \cite{lesieur_phase_2016}. This transition survives
even when $R>4$, in the sense that AMP and other tractable algorithms are
unable to find any information on the ground truth parameters. 

In the streaming problem, this undetectability implies that for mini-batches
of relative size $\alpha_\text{b} < \alpha_\text{c}$, Mini-AMP does not
improve the error of the random estimator, no matter the number of
mini-batches presented. In particular, the fully online algorithm does not
provide {\it any} useful output in this scenario. If $\alpha_\text{b} >
\alpha_\text{c}$, on the other hand, an accurate reconstruction of the
unknown values becomes possible. We illustrate the MSE as a function of the
mini-batch size in Figure \ref{fig:clust_msealpha}. 

While we have presented Mini-AMP as a means for a theoretical analysis, it
can be applied to real data, performing concrete learning tasks. To
illustrate its efficacy, we have considered the classical problem of
unsupervised clustering using the GMM. In Figure \ref{fig:clust_msealpha},
Mini-AMP is shown to obtain better performance for real data clustering than
mini-batch K-means, a state-of-the-art algorithm for streaming clustering
\cite{sculley_web-scale_2010}.

\section{Conclusion}
\label{sec:conclusion}
Let us conclude by stating that the Mini-AMP algorithm can be applied to any
problem for which the streaming can be defined and for which offline AMP
exists. Therefore, we expect that this novel development will improve the
usefulness of AMP algorithms in more practical situations.

\section*{Acknowledgments}
This work has been supported by the ERC under the European Union's FP7 Grant
Agreement 307087-SPARCS. AM thanks Paulo V. Rossi and Thibault Lesieur for
insightful discussions.

\bibliographystyle{unsrtnat}
\bibliography{refs}

\clearpage

\appendix
\section{AMP equations for different classes of models}
\label{sec:appendix_amp}
We present here the AMP equations for different models. As before, adapting them 
to the streaming setting is done by introducing $\bm{\Lambda}$, $\bm{\Theta}$ 
variables and replacing the $\eta (A, B)$ function with (\ref{eq:miniamp}).

\subsection{Generalized linear models}

Denote by $\bm{y} \in \mathbb{R}^{M}$ the response variable, by $\Phi \in
\mathbb{R}^{M \times N}$ the design matrix, and by $\bm{x} \in \mathbb{R}^N$
the parameter vector that we want to estimate. Then our generative model reads
\begin{equation}
    P(\bm{y}, \bm{x} | \Phi) = \prod_{\mu = 1}^{M} P(y_\mu | z_\mu \equiv
        \bm{\Phi}_{\mu} \cdot \bm{x}) \, \prod_{i = 1}^N P_X(x_i).
\end{equation}
The GAMP algorithm provides the following approximation to the marginals of 
$\bm{x}$
\begin{equation}
q_x (x_i | A_i, B_i) = \frac{1}{Z_x (A_i, B_i)} \, P_X (x_i) \, e^{-\frac{1}{2} A_i x_i^2 + B_i x_i},
\end{equation}
and, to the marginals of $\bm{z}$
\begin{equation}
q_z (z_\mu | y_\mu, \omega_\mu, V_\mu) = \frac{1}{Z_z (y_\mu, \omega_\mu, V_\mu)}
\, P(y_\mu | z_\mu) \, \frac{e^{-\frac{(z_\mu - \omega_\mu)}{V_\mu}}}{\sqrt{2 \pi V_\mu}}.
\end{equation}
The parameters $\bm{A}$, $\bm{B}$, $\bm{\omega}$ and $\bm{V}$ are determined by 
iterating the GAMP equations. We denote the mean and variance of $q_x (A, B)$ by 
$\eta (A, B) = \frac{\partial}{\partial B} \log Z_x (A, B)$ and $\eta' (A, B) = 
\frac{\partial \eta}{\partial B} (A, B)$ respectively; moreover, we define 
$g_\text{out} (y, \omega, V) = \frac{\partial}{\partial \omega} \log Z_z (y, 
\omega, V)$. The GAMP equations then read 
\cite{rangan_generalized_2011,zdeborova_statistical_2016}
\begin{equation}
    \begin{aligned}
        &\bm{\omega}^{(t)} = \Phi \hat{\bm{x}}^{(t)} - \bm{V}^{(t)} \circ 
            \bm{g}^{(t - 1)}, &\qquad
        &\bm{V}^{(t)} = (\Phi \circ \Phi) \, \hat{\bm{\sigma}}^{(t)}, \\
        &g_\mu^{(t)} = g_\text{out} (y_\mu, \omega_\mu^{(t)}, V_\mu^{(t)}) \;
            \forall \mu, &\qquad
        &\partial_\omega g_\mu^{(t)} = \partial_\omega g_\text{out}
            (y_\mu, \omega_\mu^{(t)}, V_\mu^{(t)}) \; \forall \mu, \\
        &\bm{B}^{(t)} = \Phi^T \bm{g}^{(t)} + \bm{A}^{(t)} \circ 
            \hat{\bm{x}}^{(t)}, &\qquad
        &\bm{A}^{(t)} = -(\Phi \circ \Phi)^T \, \partial_\omega \bm{g}^{(t)}, \\
        &\hat{x}_i^{(t + 1)} = \eta(A_i^{(t)}, B_i^{(t)}) \; \forall i, &\qquad
        &\hat{\sigma}_i^{(t + 1)} = \eta' (A_i^{(t)}, B_i^{(t)}) 
            \; \forall i.
    \end{aligned}
	\label{eq:gamp}
\end{equation}
Note that, for a Gaussian likelihood $P (y | z) = \mathcal{N} (y; z, \Delta)$, 
$g_\text{out} (y, \omega, V) = \frac{y - \omega}{\Delta + V}$. Then, by defining 
$\bm{\tilde{z}}^{(t)} = \bm{y} - \bm{\omega}^{(t)}$ and replacing $\bm{V}^{(t)}$ 
and $\bm{A}^{(t)}$ by its averages, we get via the central limit theorem
\begin{equation}
V^{(t)} \approx \frac{1}{N} \sum_{i = 1}^N \sigma^{(t)} (A^{(t)}, B_i^{(t)}), \qquad
A^{(t)} \approx \frac{\alpha}{\Delta + V^{(t)}},
\end{equation}
where we have assumed the $\Phi_{\mu i}$ are i.i.d. and have zero mean and
variance $1/N$. From these (\ref{eq:amp1})-(\ref{eq:amp3}) follow through.

\subsubsection{Variational Bayes}

We compare AMP equations to the Variational Bayes (VB) ones, which we
use with the Streaming Variational Bayes scheme. For simplicity we restrict
ourselves to the Gaussian case. As usual, VB is derived by determining
the $q_i (x_i)$ which minimize
\begin{equation}
    KL[{\textstyle \prod_i} q_i (x_i) \| P (\bm{x} | \Phi, \bm{y})] =
    -\mathbb{E}_{\{q_i\}} \log P (\bm{y} | \Phi, \bm{x}) + \sum_{i = 1}^N
    \operatorname{KL} [q_i (x_i) \| P_X (x_i)] - \log P (\bm{y} | \Phi)
\end{equation}
If done by means of a fixed-point iteration, this minimization leads to
\cite{krzakala_variational_2014}
\begin{equation}
	\hat{x}_i^{(t + 1)} = \eta\bigg(
		\frac{1}{\Delta} \sum_\mu \Phi_{\mu i}^2,
		\frac{1}{\Delta} \sum_\mu \Phi_{\mu i} (y_\mu - {\textstyle\sum_{j \neq
	i}} \Phi_{\mu j} \hat{x}_j^{(t)}) \bigg) \; \forall i,
\end{equation}
where $\eta$ is defined as before. A closer inspection shows that the
same equations are obtained by setting $\bm{V}^{(t)} = 0$ in (\ref{eq:gamp}).

As shown in \cite{krzakala_variational_2014}, this iteration leads to good results
when performed sequentially, \emph{if} the noise $\Delta$ is not fixed but learned.
We employ this same strategy in our experiments.

\subsubsection{Assumed Density Filtering}

The assumed density filtering (ADF) algorithm
\cite{opper_bayesian_1998,minka_expectation_2001} replaces the posterior
at each step $k$, $P(\bm{x} | \bm{\Phi}_k, y_k) \propto P(y_k | \bm{\Phi}_k,
\bm{x}) \, Q_{k - 1} (\bm{x})$, by the distribution $Q_k (\bm{x})$ that
minimizes
\begin{equation}
    \operatorname{KL} [P(\bm{x} | \bm{\Phi}_k, y_k) \| Q_k (\bm{x})].
    \label{eq:kl_adf}
\end{equation}
Note this is the \emph{direct} KL divergence, and not the
\emph{reverse} one $\operatorname{KL} [Q (\bm{x}) \| P(\bm{x} | \Phi,
\bm{y})]$ that is minimized in Variational Bayes. In particular
if $Q_k (\bm{x}) = \prod_{i = 1}^N q_k (x_i)$, then minimizing this KL
divergence leads to the following integral
\begin{equation}
    q_k (x_i) = q_{k - 1} (x_i) \int \bigg[\prod_{j \neq i} dx_j \, q_{k - 1}
    (x_j)\bigg] \, P(y_k | \bm{\Phi}_k, \bm{x}) \; \forall i,
    \label{eq:adf}
\end{equation}
which is tractable since we are processing a single sample $y_k$ at a time,
i.e. since the likelihood consists of a single factor. These are actually
the exact marginals of $P(\bm{x} | \bm{\Phi}_k, y_k)$, and also the equations
given by the belief propagation (BP) algorithm in the single sample limit.

The ADF equations for GLMs can be derived by using (\ref{eq:adf}) together
with the central limit theorem \cite{rossi_bayesian_2016}. Because
BP gives ADF in the single sample limit, AMP (which is based on BP) gives
the equations derived by \cite{rossi_bayesian_2016} when $M = 1$, if
one additionally neglects the correction term $-\bm{V}^{(t)}
\bm{g}^{(t - 1)}$ (analogously, if a single iteration is performed).

\begin{algorithm}[ht!]
    \caption{Assumed Density Filtering for GLMs \cite{rossi_bayesian_2016}}
    \algnewcommand\algorithmicto{\textbf{to}}
    \algrenewtext{For}[3]%
    {\algorithmicfor\ $#1 \gets #2$ \algorithmicto\ $#3$ \algorithmicdo}
    \begin{algorithmic}[1]
        \State initialize $\Lambda_{0, i} = 0$, $\Theta_{0, i} = 0 \; \forall i$
        \For{k}{1}{M}
            \State compute $\omega_k = \bm{\Phi}_k \cdot \hat{\bm{x}}_{k - 1}$, $V_k = (\bm{\Phi}_k \circ \bm{\Phi}_k) \cdot \hat{\bm{\sigma}}_{k - 1}$
            \State compute $g_k, \partial_\omega g_k$ following (\ref{eq:gamp})
            \State compute $\bm{A}_k$, $\bm{B}_k$ following (\ref{eq:gamp})
            \State compute $\bm{\hat{\sigma}}_{k}$, $\hat{\bm{x}}_{k}$ following (\ref{eq:miniamp})
            \State accumulate $\bm{\Lambda}_{k} \gets \bm{\Lambda}_{k - 1} + \bm{A}_k$
            \State accumulate $\bm{\Theta}_{k} \gets \bm{\Theta}_{k - 1} + \bm{B}_k$
        \EndFor
    \end{algorithmic}
\end{algorithm}

\subsection{Low-rank matrix factorization}

Denote by $Y \in \mathbb{R}^{N \times M}$ the matrix we want to factorize,
and by $U \in \mathbb{R}^{N \times R}$, $V \in \mathbb{R}^{M \times R}$
the matrices which product approximates $Y$. The generative model then reads
\begin{equation}
    P(Y, U, V) = \prod_{ij} P(Y_{ij} | W_{ij} \equiv \bm{U}_i \cdot \bm{V}_j) \,
    \prod_{i = 1}^N P_U(\bm{U}_i) \, \prod_{j = 1}^M P_V(\bm{V}_j),
\end{equation}
where $\bm{U}_i$ and $\bm{V}_j$ denote the $i$-th and $j$-th rows of $U$ and $V$
respectively. The algorithm provides the following approximation to the
marginal of $\bm{U}_i$
\begin{equation}
    q_U (\bm{U}_i | A_U, \bm{B}_{U,i}) = \frac{1}{Z_U (A_U,
    \bm{B}_{U, i})} \, P_U (\bm{U}_i) \, e^{-\frac{1}{2} \bm{U}_i^T A_U
    \bm{U}_i + \bm{B}_{U,i}^T \bm{U}_i}.
\end{equation}
and $q_V$ is analogously defined as the marginal of $\bm{V}_j$.  As in the
previous case, $A_U \in \mathbb{R}^{R \times R}$ and $\bm{B}_{U, i} \in
\mathbb{R}^R$ are to be determined by iterating a set of equations.  We
denote by $B_U$ the $N \times R$ matrix which rows are given by $\bm{B}_{U,
i}$, $i = 1, \dots, N$.
The functions $\eta_U (A, \bm{B}) = \mathbb{E}_{q_{U}} \, \bm{U} =
\nabla_{\bm{B}} \log Z_U (A, \bm{B})$ and $\eta'_U (A, \bm{B}) =
\nabla_{\bm{B}} \eta_U (A, \bm{B})$ give the mean and covariance of $q_U$,
and $\eta_V$ and $\eta'_V$, defined analogously, the mean and covariance of
$q_V$.

In order to write the AMP equations, we first introduce
\begin{equation}
    \begin{aligned}
    J_{ij} &= \frac{1}{\sqrt{N}} \frac{\partial \ln P (Y_{ij} | w = 0)}{\partial w}, \\
    \beta &= \frac{1}{N} \, \mathbb{E}_{P (y | w = 0)} \bigg[ \frac{\partial \ln P (y | w = 0)}{\partial w} \bigg]^2.
    \end{aligned}
\end{equation}
so as to define an effective Gaussian channel \cite{lesieur_mmse_2015}. The
equations to be iterated are then
\begin{equation}
    \begin{aligned}
        &B_U^{(t)} = J \hat{V}^{(t)} - \beta \Sigma_V^{(t)} \hat{U}^{(t - 1)}, &\qquad
        &A_U^{(t)} = \beta \, \hat{V}^{(t)} \hat{V}^{{(t)}^T}, \\
        &\hat{\bm{U}}_i^{(t)} = \eta (A_U^{(t)}, \bm{B}_{U, i}^{(t)}) \; \forall i, &\qquad
        &\Sigma_U^{(t)} = \sum_{i = 1}^N \eta' (A_U^{(t)}, \bm{B}_{U, i}^{(t)}), \\
        &B_V^{(t)} = J^T \hat{U}^{(t)} - \beta \Sigma_U^{(t)} \hat{V}^{(t)}, &\qquad
        &A_V^{(t)} = \beta \, \hat{U}^{(t)} \hat{U}^{{(t)}^T}, \\
        &\hat{\bm{V}}_j^{(t + 1)} = \eta (A_V^{(t)}, \bm{B}_{V, j}^{(t)}) \; \forall j, &\qquad
        &\Sigma_V^{(t + 1)} = \sum_{j = 1}^M \eta' (A_V^{(t)}, \bm{B}_{V, j}^{(t)}).
    \end{aligned}
\end{equation}

In order to adapt this algorithm to the online setting, we repeat procedure
(\ref{eq:miniamp}) and, as the $k$-th batch is processed, replace
calls to $\eta_{U} (A_U, \bm{B}_U)$ by
\begin{equation}
    \eta_{U} \Bigg(\underbrace{\sum_{\ell = 1}^{k - 1}
    A_{U, \ell}}_{\Lambda_{k - 1}} + A_{U, k}^{(t)}, \underbrace{\sum_{\ell = 1}^{k - 1}
    \bm{B}_{U, \ell}}_{\bm{\Theta}_{k, i}} + \bm{B}_{U, k}^{(t)} \Bigg).
\end{equation}
We assume that $U$ is fixed and $V_k$ changes for each batch $k$; thus, the calls to
$\eta_V$ do not change.

\section{State evolution and asymptotic limits}
\label{sec:appendix_se}
Through the state evolution equations, we analyze the behaviour and
performance of the algorithms described in the previous section. We
restrict ourselves to the Bayes-optimal case (i.e. the Nishimori line),
where the generative model is known. The strategy we use to go from the
offline to the streaming setting is easily adapted to the non-optimal
case.

\subsection{Generalized linear models}

The state evolution equations for a GLM with likelihood $P (y | z)$ and
prior $P_X (x)$ are
\begin{equation}
    \left\{
    \begin{aligned}
        &\hat{m}^{(t)} = -\alpha \, \mathbb{E}_{y, z, w} \; \partial_w g (y, w, 
        \rho - m^{(t)}), \\
        &m^{(t + 1)} = \mathbb{E}_{x, b} \; x \eta (\hat{m}^{(t)}, b),
    \end{aligned}
    \right.
\end{equation}
where we denote $\rho = \mathbb{E} x^2$, and the averages are taken with respect
to $P (x, b) = P_X (x) \mathcal{N} (b; \hat{m} x, \hat{m})$ and $P (y,
\omega, z) = P (y | z) \mathcal{N} (z; \omega, \rho - m) \mathcal{N}
(\omega; 0, m)$.
The MSE at each step is obtained from ${\cal E}^{(t)} = \rho - m^{(t)}$.
For a Gaussian likelihood $P (y | z) = \mathcal{N} (y; z, \Delta)$,
$\hat{m}^{(t)} = \frac{\alpha}{\Delta + {\cal E}^{(t)}}$ and we recover
(\ref{eq:se}).

The fixed points of the state evolution extremize the so-called replica free
energy
\begin{equation}
    \phi (m, \hat{m}) = \frac{1}{2} m \hat{m} - \mathbb{E}_{b, x} \log Z_x
    (\hat{m}, b) - \alpha \mathbb{E}_{y, \omega, z} \log Z_z (y, \omega,
    \rho - m)
\end{equation}
which gives the large system limit of the Bethe free energy (extremized by
AMP). The mutual information (\ref{eq:phi}) differs from $\phi (m) =
\operatorname*{extr}_{\hat{m}} \, \phi(m, \hat{m})$ by a constant -- more
specifically, by the entropy of $P (y | z)$, $i_\text{RS} (m) = \phi (m) -
\alpha H[P (y | z)]$. For a Gaussian likelihood, $H[P (y | z)] = \frac{1}{2}
\log(2 \pi e \Delta)$.

In order to adapt this to the streaming setting, we introduce
$\lambda_k^{(t)} = \sum_{\ell = 1}^{k - 1} \hat{m}_\ell^{(t_\text{max})} +
\hat{m}_k^{(t)}$ and iterate instead, for each mini-batch $k$
\begin{equation}
    \left\{
    \begin{aligned}
        &\lambda_k^{(t)} = \lambda_{k - 1} - \alpha_b \, \mathbb{E}_{y, z, w} \;
        \partial_w g (y, w, \rho - m_k^{(t)}), \\
        &m_k^{(t + 1)} = \mathbb{E}_{x, b_k} \; x \eta (\lambda_k^{(t)}, b),
    \end{aligned}
    \right.
    \label{eq:appendix_se}
\end{equation}
with the averages now computed over $P (x, b_k) = P_X (x) \mathcal{N} (b_k;
\lambda_k x, \lambda_k)$.
These equations should be iterated for $t = 1, \dots, t_\text{max}$, at
which point we assign $\lambda_k = \lambda_{k - 1}^{(t_\text{max})}$. The
MSE on $\bm{x}$ after mini-batch $k$ is processed is then given by ${\cal
E}_k = \rho - m_k^{(t_\text{max})}$.

Note that in the small batch size limit ($\alpha_b \to 0$), the equation for
$\lambda$ becomes an ODE
\begin{equation}
    \frac{d\lambda}{d\alpha} = -\mathbb{E}_{y, z, w} \; \partial_w
    g(y, w, \rho - m (\lambda)),
\end{equation}
which describes the performance of the ADF algorithm
\cite{opper_bayesian_1998,rossi_bayesian_2016}.

The free energy is also easily rewritten
\begin{equation}
    \phi_k (m_k, \hat{m}_k ; \lambda_{k - 1}) = \frac{1}{2} m_k \hat{m}_k -
    \mathbb{E}_{b, x} \log Z_x (\lambda_{k - 1} + \hat{m}_k, b) - \alpha_b
    \, \mathbb{E}_{y, \omega, z} \log Z_z (y, \omega, \rho - m_k),
\end{equation}
or analogously, by working with $\lambda_k = \lambda_{k - 1} + \hat{m}_k$ instead
of $\hat{m}_k$
\begin{equation}
    \phi_k (m_k, \lambda_k ; \lambda_{k - 1}) = \frac{1}{2} m_k (\lambda_k -
    \lambda_{k - 1}) - \mathbb{E}_{b, x} \log Z_x (\lambda_k, b) -
    \alpha_b \, \mathbb{E}_{y, \omega, z} \log Z_z (y, \omega, \rho - m_k).
\end{equation}
from which it is clear that the extrema of $\phi_k$ are given by the fixed
points of (\ref{eq:appendix_se}).

\subsubsection{Asymptotic behavior}

Equations (\ref{eq:appendix_se}) can be put in the following form
\begin{equation}
    \left\{
        \begin{aligned}
            {\cal E}_k &= \varepsilon(\lambda_k), \\
            \lambda_k &= \lambda_{k - 1} + \alpha_b \, \delta({\cal E}_k),
        \end{aligned}
    \right.
\end{equation}
where $\varepsilon(\lambda_k)$ and $\delta({\cal E}_k)$ are functions that
depend on the prior/channel respectively. Assuming $\varepsilon$ is
invertible, we rewrite this system of equations as a function of ${\cal
E}_k$ only
\begin{equation}
    \varepsilon^{-1} ({\cal E}_k) = \varepsilon^{-1} ({\cal E}_{k - 1}) +
    \alpha_b \, \delta({\cal E}_k),
\end{equation}
and then solve this equation for ${\cal E}_k$; that gives us a recurrence relation
which is unsolvable in most cases. We use instead asymptotic forms for
$\varepsilon$ and $\delta$, obtained in the $\lambda \to \infty$, ${\cal E}
\to 0$ limit. For the Bernoulli-Gaussian prior $P_0 (x_i) = \rho \,
\mathcal{N} (x_i; 0, 1) + (1 - \rho) \, \delta(x_i)$, we have
\begin{equation}
    \varepsilon(\lambda) \sim \frac{\rho}{\lambda},
\end{equation}
while a Gaussian likelihood gives, in the $\Delta \to 0$ limit,
$\delta({\cal E}) = \frac{1}{{\cal E}}$. Thus, for SLR
\begin{equation}
    \frac{\rho}{{\cal E}_k} \approx \frac{\rho}{{\cal E}_{k - 1}} +
    \frac{\alpha_\text{b}}{{\cal E}_k} \; \Rightarrow \; 
    {\cal E}_k \approx \bigg(1 - \frac{\alpha_b}{\rho}\bigg)^k
    {\cal E}_0,
\end{equation}
leading to (\ref{eq:asymptotic}). In the $\alpha_b \to 0$ limit we recover
the expression obtained by \cite{rossi_bayesian_2016}, $\operatorname{MSE}
(\alpha) \sim e^{-\frac{\alpha}{\rho}}$.

\subsection{Low-rank matrix factorization}

Also for low-rank models the large $N$ limit can be analyzed by taking into
account that $A_{U, V}^{(t)}$ and $B_{U, V}^{(t)}$ converge in distribution to
\cite{lesieur_mmse_2015}
\begin{equation}
    \begin{aligned}
        A_U^{(t)} &= \beta M_V^{(t)}, &\; \bm{B}_U &\sim \mathcal{N} (\beta
        M_V^{(t)} \bm{U}, \beta M_V^{(t)}), \\ 
        A_V^{(t)} &= \beta M_U^{(t)}, &\; \bm{B}_V &\sim \mathcal{N} (\beta
        M_U^{(t)} \bm{V}, \beta M_U^{(t)}),
    \end{aligned}
\end{equation}
where $M_U, M_V \in \mathbb{R}^{R \times R}$ are the overlap matrices
between the ground truth and the estimate at time $t$, that is
\begin{equation}
    \begin{aligned}
        &M_U^{(t)} = \mathbb{E}_{\bm{U}, \bm{B}_U} \,
            \bm{U}^T \eta_U (\beta M_V^{(t)}, \bm{B}_U), \\
        &M_V^{(t + 1)} = \alpha \, \mathbb{E}_{\bm{V}, \bm{B}_V} \,
            \bm{V}^T \eta_V (\beta M_U^{(t)}, \bm{B}_V).
    \end{aligned}
\end{equation}
While computing the expectations might become unfeasible for $R > 1$, an ansatz
of the following form can often be used \cite{lesieur_mmse_2015,lesieur_phase_2016}
\begin{equation}
    M_{U, V} = a_{U, V} \mathbb{I}_R + b_{U, V} \mathbb{J}_R,
\end{equation}
with $\mathbb{J}_R$ denoting the $R \times R$ matrix of ones. This
significantly simplifies the iteration above.

Again we adapt this to the online case by incrementing the matrices obtained
as each batch is processed, that is, we replace $M_V$ by
\begin{equation}
    \lambda_{V, k}^{(t + 1)} = \lambda_{V, k - 1} + \alpha_b \mathbb{E}_{\bm{V},
        \bm{B}_V} \, \bm{V}^T \eta_V (\beta M_{U, k}^{(t)}, \bm{B}_V).
\end{equation}
Note that since $U$ is fixed, the equation for $M_U$ does not change.

The replica free energy reads, in the offline case
\begin{equation}
    \phi (M_U, M_V) = \frac{\beta}{2} \operatorname{Tr} M_U M_V^T \, -
    \mathbb{E}_{\bm{U}, \bm{B}_U} \, \log Z_U (\beta M_V, \bm{B}_U) \,
    - \mathbb{E}_{\bm{V}, \bm{B}_V} \, \log Z_V (\beta M_U, \bm{B}_V),
\end{equation}
and we adapt it to the online case by taking into account that $\lambda_V$
is being incremented
\begin{equation}
    \begin{aligned}
        \phi (M_{U, k}, \lambda_{V, k}& ; \lambda_{V, k - 1}) = \frac{\beta}{2}
        \operatorname{Tr} M_{U, k} \, (\lambda_{V, k} - \lambda_{V, k - 1}) \, - \\
        &\mathbb{E}_{\bm{U}, \bm{B}_U} \, \log Z_U (\beta \lambda_{V, k}, \bm{B}_U)
        \, - \mathbb{E}_{\bm{V}, \bm{B}_V} \, \log Z_V (\beta M_{U, k}, \bm{B}_V).
    \end{aligned}
\end{equation}

\section{Performance for different number of iterations}
\label{sec:appendix_conv}
For our experiments in Figure \ref{fig:tradeoff}, Mini-AMP has been iterated
until, for each block, convergence is achieved -- that is, until 
$\frac{1}{N} \| \hat{\bm{x}}^{(t)} - \hat{\bm{x}}^{(t - 1)} \|_1 < 10^{-13}$.
Remarkably, our framework allows us to study the performance of the
algorithm even if we do not iterate it until convergence, but only for a few
steps $t_\text{max}$ instead. In Figure \ref{fig:tmax}, we investigate the
performance of Mini-AMP under the same settings of Figure \ref{fig:tradeoff}
(center), for different values of $t_\text{max}$. We observe that the
performance deteriorates if convergence is not reached.

\begin{figure}[ht]
    \centering
    \includegraphics[height=5.5cm]{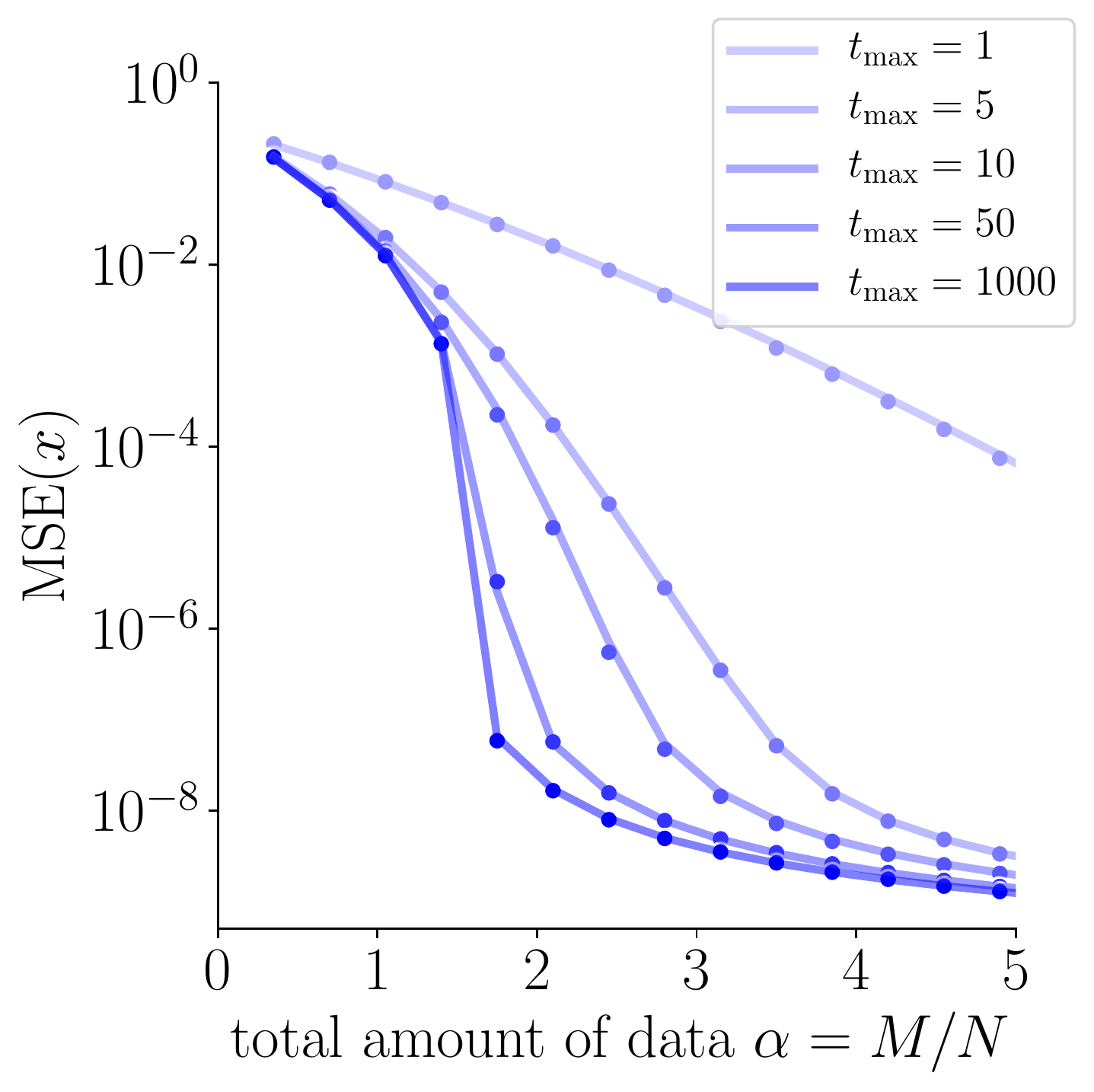}
    \caption{Performance of the Mini-AMP algorithm for different values of
    $t_\text{max}$, under the same settings of Figure \ref{fig:tradeoff}
    (center), and $\alpha_b = 0.35$. 
    Solid lines give state evolution, and
    symbols results of empirical experiments averaged over 10 realizations
    of size $N = 2000$. The performance deteriorates if the algorithm is
    not iterated until convergence.}
    \label{fig:tmax}
\end{figure}

\section{Experiments on real-world data}
\label{sec:appendix_realdata}
For the experiments with real data, we have used the following model
\begin{equation}
    P (U, V, Y) = \prod_{ij} \mathcal{N} (Y_{ij} ; \bm{U}_i \cdot \bm{V}_j,
    \Delta) \prod_{i = 1}^N \mathcal{N}_{\geq 0} (\bm{U}_i; \bm{0}, \sigma^2
    \mathbb{I}_R) \prod_{j = 1}^M \frac{1}{R} \sum_{k = 1}^R \delta (\bm{V}_j
    - \mathbbm{e}_k),
    \label{eq:glm}
\end{equation}
where
\begin{equation}
    \mathcal{N}_{\geq 0} (\bm{x}; \bm{\mu}, \sigma^2 \mathbb{I}_R) =
    \frac{1}{\mathcal{Z} (\bm{\mu}, \sigma^2)} \, \mathcal{N} (\bm{x};
    \bm{\mu}, \sigma^2 \mathbb{I}_R) \, \prod_{k = 1}^R \theta(x_k)
\end{equation}
is a truncated normal distribution supported on the positive
quadrant of a $R$-dimensional space, and $\mathcal{Z} (\bm{\mu}, \sigma^2)$
ensures proper normalization.

Note that evaluating the $\eta (A, \bm{B})$ function in this case is not
trivial, since it depends on the following integral
\begin{equation}
    \begin{aligned}
        Z_U (A, \bm{B}) &= \frac{1}{\mathcal{Z} (\bm{\mu}, \sigma^2)} \,
        \int d\bm{x} \, e^{-\frac{1}{2} \bm{x}^T A \bm{x} + \bm{B}^T \bm{x}}
        \, \prod_{k = 1}^R P_0 (x_k) \\
        &\propto \int \prod_{k = 1}^R dx_k \, P_0 (x_k) \, e^{-\frac{1}{2}
        A_{kk} x_k^2 + \big(B_k + \sum_{\ell \neq k} A_{k\ell} x_\ell\big)
        x_k}
    \end{aligned}
\end{equation}
where, in this case, $P_0 (x_k) = \mathcal{N} (x_k; \mu_k, \sigma^2) \,
\theta(x_k)$.  We proceed by performing a mean-field approximation. We first
define
\begin{equation}
    \tilde{\eta} (A, B) = \frac{\partial}{\partial B} \log \int dx \, P_0 (x) \,
    e^{-\frac{1}{2} A x^2 + Bx}.
\end{equation}
for scalar $A$ and $B$. Then, for each $i = 1, \dots, N$, we iterate
\begin{equation}
    \hat{U}_{ik} = \tilde{\eta} \big(A_{kk}, B_{ik} - \frac{1}{2} \sum_{\ell
    \neq k} A_{k\ell} \hat{U}_{i\ell}\big)
\end{equation}
sequentially in $k = 1, \dots, R$ until convergence is reached, at which
point we use the values obtained for assigning $\bm{\hat{U}}^{(t)}$ in AMP.
The variances are computed from
\begin{equation}
    \tilde{\sigma}_{ik} = \tilde{\eta}' \big(A_{kk},
    B_{ik} - \frac{1}{2} \sum_{\ell \neq k} A_{k\ell} \hat{U}_{i\ell}\big)
\end{equation}
and the covariance matrix used in AMP is obtained as a function of these variances
\begin{equation}
    \sigma_{k\ell} = \left\{
        \begin{aligned}
            &\sum_{i = 1}^N \tilde{\sigma}_{ik}, \quad&& \text{if $k = \ell$}, \\
            &-\frac{1}{2} A_{k\ell} \sum_{i = 1}^N \tilde{\sigma}_{ik}
                \tilde{\sigma}_{i\ell}, \quad&& \text{otherwise},
        \end{aligned}
    \right.
\end{equation}
where in order to assign the off-diagonal terms we have used a linear response
approximation, $\sigma_{k \ell} = \sum_{i = 1}^N \frac{\partial
\hat{U}_{ik}}{\partial B_{i \ell}}$.

We proceed by detailing other aspects of the experiments
\begin{description}
    \item[Initialization] At the first few mini-batches (usually the first
        five), we reinitialize the position of the centroids. We use the same
		strategy as the k-means++ algorithm \cite{arthur2007k}: 
		the first centroid is picked at random from the data points, and the
		next ones are sampled so as to have them far apart from each other. The
		labels are initialized according to the closest centroid.

    \item[Stopping criterion] For each batch, Mini-AMP was iterated either for 50 steps
        or until $\frac{1}{NR} \sum_{ik} |\hat{U}_{ik}^{(t)} -
        \hat{U}_{ik}^{(t - 1)}| + \frac{1}{MR} \sum_{jk} |\hat{V}_{jk}^{(t)} -
        \hat{V}_{jk}^{(t - 1)}| < 10^{-7}$.

    \item[Noise learning] We do not assign a fixed value for $\Delta$, but instead
        update it after each mini-batch is processed using a simple learning rule
        \begin{equation}
            \hat{\Delta}_k = \frac{1}{NM} \sum_{ij} (Y_{k,ij} - \bm{\hat{U}}_{k, i} \cdot \bm{V}_{k, j})^2.
        \end{equation}

	\item[Preprocessing] For MNIST, we work with all samples of digits 0, 1 and 2.
		They are rescaled so that the pixel intensities are between 0 and 1.
		For the 20 newsgroups dataset, we build Term Frequency Inverse
		Document Frequency (TF-IDF) features for 3 top-level hierarchies (\texttt{comp},
		\texttt{rec} and \texttt{sci}), and use the 1000 most frequent words;
		we rescale each feature vector so that its maximum is equal to 1.

	\item[Mini-batch K-means] We use the mini-batch K-means
		\cite{sculley_web-scale_2010} implementation available on scikit-learn
		\cite{scikit-learn}. Default parameters were used, apart from the
		centroids initialization, which was set to random normal variables of
		zero mean and variance $10^{-3}$ -- this seemed to improve the algorithm
		performance with respect to the standard choices.
\end{description}

\end{document}